\def\isarxiv{1}
\newcommand{\neutralize}[1]{\expandafter\let\csname c@#1\endcsname\count@}
\declaretheorem[name=Theorem,parent=section]{theorem}
\declaretheorem[name=Lemma,parent=section]{lemma}
\declaretheorem[name=Assumption, parent=section]{assumption}
\declaretheorem[name=Condition, parent=section]{condition}
\declaretheorem[qed=$\triangleleft$,name=Remark,style=definition, parent=section]{remark}
\declaretheorem[name=Proposition, parent=section]{proposition}
  \renewenvironment{proof}[1][Proof]%
  {%
   \par\noindent{\bfseries\upshape {#1.}\ }%
  }%
  {\qed\newline}
\theoremstyle{definition}  %
\newtheorem{corollary}{Corollary}[section]
\theoremstyle{plain}
\newtheorem{definition}{Definition}[section]
\xpatchcmd{\proof}{\itshape}{\normalfont\proofnameformat}{}{}
\newcommand{\proofnameformat}{\bfseries}
\Crefname{assumption}{Assumption}{Assumptions}
    \let\Cref\crtCref
    \let\cref\crtcref
\DeclareDocumentCommand{\XDeclarePairedDelimiter}{mm}
 {
  \__egreg_delimiter_clear_keys: %
  \keys_set:nn { egreg/delimiters } { #2 }
  \use:x %
   {
    \exp_not:n {\NewDocumentCommand{#1}{sO{}m} }
     {
      \exp_not:n { \IfBooleanTF{##1} }
       {
        \exp_not:N \egreg_paired_delimiter_expand:nnnn
         { \exp_not:V \l_egreg_delimiter_left_tl }
         { \exp_not:V \l_egreg_delimiter_right_tl }
         { \exp_not:n { ##3 } }
         { \exp_not:V \l_egreg_delimiter_subscript_tl }
       }
       {
        \exp_not:N \egreg_paired_delimiter_fixed:nnnnn 
         { \exp_not:n { ##2 } }
         { \exp_not:V \l_egreg_delimiter_left_tl }
         { \exp_not:V \l_egreg_delimiter_right_tl }
         { \exp_not:n { ##3 } }
         { \exp_not:V \l_egreg_delimiter_subscript_tl }
       }
     }
   }
 }
\XDeclarePairedDelimiter{\supnorm}{
  left=\lVert,
  right=\rVert,
  subscript=\infty
  }
\newcommand{\cA}{\mathcal{A}}
\newcommand{\cE}{\mathcal{E}}
\newcommand{\cH}{\mathcal{H}}
\newcommand{\bE}{\mathbb{E}}
\newcommand{\bP}{\mathbb{P}}
\newcommand{\En}{\mathbb{E}}
\newcommand{\wh}[1]{\widehat{#1}}
\renewcommand{\d}{\textnormal{d}}
\newcommand{\ldef}{\vcentcolon=}
\newcommand{\Unif}{\mathrm{Unif}}
\newcommand{\TV}{\mathrm{TV}}
\newcommand{\KL}{\mathrm{KL}}
\newcommand{\Hels}{\mathsf{H^2}}
\DeclarePairedDelimiter{\abs}{\lvert}{\rvert}
\DeclarePairedDelimiter{\brk}{[}{]}
\DeclarePairedDelimiter{\prn}{(}{)}
\DeclarePairedDelimiter{\ceil}{\lceil}{\rceil}
\DeclarePairedDelimiter{\floor}{\lfloor}{\rfloor}
\def\medskip{\vskip 10 pt}
\def\bigskip{\vskip 15 pt}
\def\texitem#1{\par\vspace{5pt}
\noindent\hangindent 20pt
\hbox to 20pt {\hss #1 ~}\ignorespaces}
\algnewcommand{\IfThen}[2]{%
  \State \algorithmicif\ #1\ \algorithmicthen\ #2}
\let\underbar\undefined
\let\save@mathaccent\mathaccent
\newcommand*\if@single[3]{%
  \setbox0\hbox{${\mathaccent"0362{#1}}^H$}%
  \setbox2\hbox{${\mathaccent"0362{\kern0pt#1}}^H$}%
  \ifdim\ht0=\ht2 #3\else #2\fi
  }
\newcommand*\rel@kern[1]{\kern#1\dimexpr\macc@kerna}
\newcommand*\widebar[1]{\@ifnextchar^{{\wide@bar{#1}{0}}}{\wide@bar{#1}{1}}}
\newcommand*\underbar[1]{\@ifnextchar_{{\under@bar{#1}{0}}}{\under@bar{#1}{1}}}
\newcommand*\wide@bar[2]{\if@single{#1}{\wide@bar@{#1}{#2}{1}}{\wide@bar@{#1}{#2}{2}}}
\newcommand*\under@bar[2]{\if@single{#1}{\under@bar@{#1}{#2}{1}}{\under@bar@{#1}{#2}{2}}}
\newcommand*\wide@bar@[3]{%
  \begingroup
  \def\mathaccent##1##2{%
    \let\mathaccent\save@mathaccent
    \if#32 \let\macc@nucleus\first@char \fi
    \setbox\z@\hbox{$\macc@style{\macc@nucleus}_{}$}%
    \setbox\tw@\hbox{$\macc@style{\macc@nucleus}{}_{}$}%
    \dimen@\wd\tw@
    \advance\dimen@-\wd\z@
    \divide\dimen@ 3
    \@tempdima\wd\tw@
    \advance\@tempdima-\scriptspace
    \divide\@tempdima 10
    \advance\dimen@-\@tempdima
    \ifdim\dimen@>\z@ \dimen@0pt\fi
    \rel@kern{0.6}\kern-\dimen@
    \if#31
      \overline{\rel@kern{-0.6}\kern\dimen@\macc@nucleus\rel@kern{0.4}\kern\dimen@}%
      \advance\dimen@0.4\dimexpr\macc@kerna
      \let\final@kern#2%
      \ifdim\dimen@<\z@ \let\final@kern1\fi
      \if\final@kern1 \kern-\dimen@\fi
    \else
      \overline{\rel@kern{-0.6}\kern\dimen@#1}%
    \fi
  }%
  \macc@depth\@ne
  \let\math@bgroup\@empty \let\math@egroup\macc@set@skewchar
  \mathsurround\z@ \frozen@everymath{\mathgroup\macc@group\relax}%
  \macc@set@skewchar\relax
  \let\mathaccentV\macc@nested@a
  \if#31
    \macc@nested@a\relax111{#1}%
  \else
    \def\gobble@till@marker##1\endmarker{}%
    \futurelet\first@char\gobble@till@marker#1\endmarker
    \ifcat\noexpand\first@char A\else
      \def\first@char{}%
    \fi
    \macc@nested@a\relax111{\first@char}%
  \fi
  \endgroup
}
\newcommand*\under@bar@[3]{%
  \begingroup
  \def\mathaccent##1##2{%
    \let\mathaccent\save@mathaccent
    \if#32 \let\macc@nucleus\first@char \fi
    \setbox\z@\hbox{$\macc@style{\macc@nucleus}_{}$}%
    \setbox\tw@\hbox{$\macc@style{\macc@nucleus}{}_{}$}%
    \dimen@\wd\tw@
    \advance\dimen@-\wd\z@
    \divide\dimen@ 3
    \@tempdima\wd\tw@
    \advance\@tempdima-\scriptspace
    \divide\@tempdima 10
    \advance\dimen@-\@tempdima
    \ifdim\dimen@>\z@ \dimen@0pt\fi
    \rel@kern{0.6}\kern-\dimen@
    \if#31
      \underline{\rel@kern{-0.6}\kern\dimen@\macc@nucleus\rel@kern{0.4}\kern\dimen@}%
      \advance\dimen@0.4\dimexpr\macc@kerna
      \let\final@kern#2%
      \ifdim\dimen@<\z@ \let\final@kern1\fi
      \if\final@kern1 \kern-\dimen@\fi
    \else
      \underline{\rel@kern{-0.6}\kern\dimen@#1}%
    \fi
  }%
  \macc@depth\@ne
  \let\math@bgroup\@empty \let\math@egroup\macc@set@skewchar
  \mathsurround\z@ \frozen@everymath{\mathgroup\macc@group\relax}%
  \macc@set@skewchar\relax
  \let\mathaccentV\macc@nested@a
  \if#31
    \macc@nested@a\relax111{#1}%
  \else
    \def\gobble@till@marker##1\endmarker{}%
    \futurelet\first@char\gobble@till@marker#1\endmarker
    \ifcat\noexpand\first@char A\else
      \def\first@char{}%
    \fi
    \macc@nested@a\relax111{\first@char}%
  \fi
  \endgroup
}
\newcommand{\pth}[1]{\left( #1 \right)}
\newcommand{\qth}[1]{\left[ #1 \right]}
\newcommand{\sth}[1]{\left\{ #1 \right\}}
\newcommand{\calH}{\mathcal{H}}
\newcommand{\rmd}{\mathrm{d}}
\newcommand{\alg}{\text{Alg}}
\newcommand{\bPbar}{\widebar{P}}
\newcommand{\indic}{\mathbbm{1}}
\DeclareOldFontCommand{\rm}{\normalfont\rmfamily}{\mathrm}
\newcommand{\stepa}[1]{\overset{\rm (a)}{#1}}
\newcommand{\stepb}[1]{\overset{\rm (b)}{#1}}
\newcommand{\stepc}[1]{\overset{\rm (c)}{#1}}
\newcommand{\stepd}[1]{\overset{\rm (d)}{#1}}
\newcommand{\optarm}{a^\star}
\newcommand{\optsp}{p_t^\star}
\newcommand{\optmi}{I_t^\star}
\newcommand{\optspfreq}{p_{t,\mathrm{F}}^\star}
\newcommand{\optspnonint}{p_{t,\mathrm{NI}}^\star}
\newcommand{\optminonint}{I_{t,\mathrm{NI}}^\star}
\newcommand{\optspexp}{p_{t,\mathrm{E}}^\star}
\newcommand{\optmiexp}{I_{t,\mathrm{E}}^\star}
\DeclareMathOperator{\Ber}{\mathrm{Ber}}
\DeclareMathOperator{\Bin}{\mathrm{Bin}}
\crefname{lemma}{Lemma}{Lemmas}
\crefname{proposition}{Proposition}{Propositions}
\crefname{remark}{Remark}{Remarks}
\crefname{corollary}{Corollary}{Corollaries}
\crefname{definition}{Definition}{Definitions}
\crefname{conjecture}{Conjecture}{Conjectures}
\crefname{figure}{Figure}{Figures}
\begin{document}
\ifdefined\isarxiv
\title{Evolution of Information in Interactive Decision Making: \\ A Case Study for Multi-Armed Bandits}
\else
\title{Evolution of Information in Interactive Decision Making:  A Case Study for Multi-Armed Bandits}
\fi
\date{}

\ifdefined\isarxiv
\author{Yuzhou Gu\thanks{\texttt{yuzhougu@nyu.edu}. New York University.}
\and
Yanjun Han\thanks{\texttt{yanjunhan@nyu.edu}. New York University.}
\and
Jian Qian\thanks{\texttt{jianqian@nyu.edu}. New York University. Supported by ARO through award W911NF-21-1-0328, as well as the Simons Foundation and the NSF through awards DMS-2031883 and PHY-2019786.}}
\else
\author{%
Yuzhou Gu\\
New York University\\
\texttt{yuzhougu@nyu.edu}\\
\And
Yanjun Han\\
New York University\\
\texttt{yanjunhan@nyu.edu}\\
\And
Jian Qian\\
New York University\\
\texttt{jianqian@nyu.edu}
}

\fi

\maketitle

\begin{abstract}
We study the evolution of information in interactive decision making through the lens of a stochastic multi-armed bandit problem. Focusing on a fundamental example where a unique optimal arm outperforms the rest by a fixed margin, we characterize the optimal success probability and mutual information over time. Our findings reveal distinct growth phases in mutual information---initially linear, transitioning to quadratic, and finally returning to linear---highlighting curious behavioral differences between interactive and non-interactive environments. In particular, we show that optimal success probability and mutual information can be decoupled, where achieving optimal learning does not necessarily require maximizing information gain. These findings shed new light on the intricate interplay between information and learning in interactive decision making.

\end{abstract}
\ifdefined\isarxiv
\else
\fi

\section{Introduction} \label{sec:intro}
Consider the following instance of a stochastic multi-armed bandit problem: there are $n$ arms in total, where the optimal arm $\optarm\in [n]$ is uniformly at random, and the reward distribution of arm $i$ is
\begin{align}\label{eq:reward_distribution}
r^i \sim \begin{cases}
    \Ber\pth{\frac{1+\Delta}{2}} &\text{if }i = \optarm, \\
    \Ber\pth{\frac{1-\Delta}{2}} &\text{otherwise}.
\end{cases}
\end{align}
Here $\Delta\in (0,1]$ is a fixed noise parameter. In other words, the best arm $\optarm$ is uniformly better than the rest of the arms by a fixed margin $\Delta$. Readers familiar with the bandit literature shall immediately find that this is the lower bound instance for the multi-armed bandit, where it is well-known (see, e.g., \citep[Chapter 15]{lattimore2020bandit}) that the sample complexity of identifying the best arm is $\Theta\pth{\frac{n}{\Delta^2}}$. In contrast, it is also a classical result (e.g., via Fano's inequality) that in the non-interactive setting, the sample complexity becomes $\Theta\pth{\frac{n\log n}{\Delta^2}}$. In other words, the interactive sampling nature of multi-armed bandits offers a $\Theta(\log n)$ gain in the sample complexity compared with the non-interactive sampling.

In this paper, we take a closer look into this seemingly toy example, and investigate how an interactive procedure starts to accumulate information and identify the best arm below the sample complexity, i.e., when $t < \frac{n}{\Delta^2}$. Specifically, denoting by $a_t\in [n]$ the action taken at time $t$ and $\calH_t = \sigma(a_1, r_1^{a_1}, \dots, a_t, r_t^{a_t})$ the available history up to time $t$, we will study the following two quantities:
\begin{align}
    \optsp = \sup_{\text{Alg}} \bP(a_{t+1} = \optarm), \qquad 
    \optmi = \sup_{\text{Alg}} I(\optarm; \calH_t).
\end{align}
In other words, $\optsp$ is the optimal success probability of identifying the best arm $\optarm$ after $t$ rounds, and $\optmi$ is the optimal mutual information accumulated through a time horizon of $t$ rounds. Here both supremums are taken over all possible interactive algorithms with the knowledge of $n$ and $\Delta$. In the rest of this paper, we will be interested in the evolution of $\optsp$ and $\optmi$ as a function of $t$, especially for the curious regime of a small $t$. In fact, before the learner can reliably identify the best arm $\optarm$, the optimal information $\optmi$ exhibits a nonlinear accumulation in $t$: for very small $t$ we expect little difference between interactive and non-interactive settings, so the heuristic from the non-interactive setting would suggest a linear scaling $\optmi \asymp \frac{t\Delta^2}{n}$; however, since the optimal bandit algorithm only needs $t\asymp \frac{n}{\Delta^2}$ samples to identify the best arm reliably, we should have $\optmi \asymp \log n$ for this choice of $t$, which is $\Theta(\log n)$ larger than the non-interactive heuristic. Again, just like the $\Theta(\log n)$ gain in the sample complexity under the interactive case, even in this toy example, it is unknown when and how interactive learning departs from non-interactive learning and leads to a nonlinear learning curve.

We remark that this stylized toy example is merely used for a case study, and that we do \emph{not} intend to advocate the use of our algorithms in more general settings; they are designed specifically for theoretical analysis. However, in this case study, we find this toy example to be sufficiently illustrative for several interesting phenomena in interactive learning, as well as failures in existing approaches of establishing them: 
\begin{enumerate}[leftmargin=6mm]
  \setlength{\parsep}{0pt}
    \setlength{\parskip}{0pt}
    \item \emph{Understanding the true shape of learning in multi-armed bandits}: The influential work \citep{garivier2019explore} characterizes the ``true shape of regret'' in bandit problems, where the growth of optimal regret progresses through three regimes: initially linear in time, then squared root in time, and finally logarithmic in time. Building on this, we ask for the ``true shape of learning'', especially for the initial phase (or the ``burn-in'' period). Even in the first regime, where the regret grows linearly, the optimal algorithm still engages in nontrivial learning, accumulating information about the environment. Notably, interactive learning plays a pivotal role in this initial phase, enabling a $\Theta(\log n)$ reduction in the sample complexity compared with the non-interactive approaches. Therefore, characterizing the trajectories of $(\optsp, \optmi)$, even in this toy example, provides deeper insight into the mechanisms of bandit learning beyond regret analysis.
    \item \emph{Characterization of mutual information for general interactive decision making}: There have been recent advances on the statistical complexity of general interactive decision making, most notably the DEC (decision-estimation coefficient) framework \citep{foster2021statistical,foster2022complexity,foster2023tight}. One important remaining question in the DEC framework is to close the gap of the so-called ``estimation complexity'', which precisely corresponds to, in the multi-armed bandits problem, the $\Theta(\log n)$ reduction of the regret. Towards closing this general gap, the recent work \citep{chen2024assouad} develops a unified lower bound proposing to keep track of certain notions of information, such as the mutual information; however, this work does not address the problem of how to bound the mutual information in interactive scenarios. This task could be very challenging, as witnessed by another recent work \citep{rajaraman2024statistical} which proposes an entirely new line of information-theoretic analysis in the special case of non-linear ridge bandits. Unfortunately, as will be shown later, even in this toy example their tool falls short of giving the right evolution of $\optmi$ in some important regimes. Therefore, this work adds new ideas and tools to the literature on bounding the success probability and mutual information in interactive environments.
    \item \emph{Exploring the interplay between information and learning}: A more interesting question is whether the high-level proposal of using information to characterize interactive learning in \citep{chen2024assouad} could have inherent limitations. In bandit literature, an upper bound of $\optmi$ is often translated into an upper bound of $\optsp$ (e.g., via Fano's inequality). Conversely, working in the ridge bandit setting inspired by \citep{lattimore2021bandit,huang2021optimal}, \citep{rajaraman2024statistical} leveraged the reverse direction, critically using an upper bound of $\optsp$ to bound the information gain $I_{t+1}^\star - \optmi$ in interactive settings. However, it is a priori unclear if some of these links could be strictly loose, where the evolution of $\optsp$ (learning) may not always align with the evolution of $\optmi$ (information). For instance, the algorithms that achieve optimal learning may not accumulate the largest amount of information. If such discrepancies arise, mutual information alone might not suffice to establish fundamental limits of learning, and new technical tools will be called for. Our multi-armed bandit example is very natural and exactly identifies this important separation. 
\end{enumerate}

\paragraph{Notation and terminology.} Logarithms have base $e$. For two non-negative functions $f$ and $g$, we use $f\lesssim g$ (or $f=O(g)$) to denote that $f\leq Cg$ for a universal constant $C>0$; $f\gtrsim g$ (or $f=\Omega(g)$) means that $g\lesssim f$; and $f\asymp g$ (or $f=\Theta(g)$) means $f\lesssim g$ and $g\lesssim f$. For probability measures $P$ and $Q$ over the same space, let $\TV(P,Q) = \frac{1}{2}\int |\rmd P-\rmd Q|$ be the total variation distance, $\Hels(P,Q) = \int (\sqrt{\rmd P} - \sqrt{\rmd Q})^2$ be the squared Hellinger distance, and $\KL(P\|Q) = \int \rmd P\log\frac{\rmd P}{\rmd Q}$ be the Kullback--Leibler divergence. For a joint distribution $P_{XY}$, let $I(X;Y) = \KL\pth{P_{XY}\| P_X P_Y}$ be the mutual information between $X$ and $Y$. For $x,y\in \mathbb{R}$, let $x\wedge y:= \min\{x,y\}$ and $x\vee y := \max\{x,y\}$. Instead of calling upper and lower bounds which may cause confusion, throughout the paper we will use ``achievability'' to refer to lower bounds of $\optsp$ and $\optmi$ via constructing explicit algorithms, and ``converse'' to refer to upper bounds of $\optsp$ and $\optmi$ that hold for all possible algorithms. 

\subsection{Main results and discussions}
Our main result is a complete characterization of the optimal success probability $\optsp$ and the optimal mutual information $\optmi$.

\begin{theorem} \label{thm:main}
  Assume $0<\Delta=1-\Omega(1)$.
  For $t\ge 1$, we have
  \begin{align}
    \label{eqn:thm-main}
    \optsp \asymp
      \begin{cases}
      \frac{1}{n} & \text{if}~ t\Delta^2 \le 1 \\
      \frac{t \Delta^2}{n} & \text{if}~ 1< t\Delta^2 \le n \\
      1 & \text{if}~ t\Delta^2 > n
    \end{cases}, \qquad
    \optmi \asymp \begin{cases}
      \frac{t \Delta^2} n & \text{if}~ t\Delta^2 \le 1 \\
      \frac{(t \Delta^2)^2}{n} & \text{if}~ 1< t\Delta^2 \le \log n \\
      \frac{t \Delta^2 \log n} n & \text{if}~ \log n < t\Delta^2 \le n \\
      \log n & \text{if}~ t\Delta^2 > n
    \end{cases}.
  \end{align}
  Furthermore, all achievablity results can be attained by \cref{algo:main} in \cref{sec:achieve-success}.
\end{theorem}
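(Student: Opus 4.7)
The plan is to prove \cref{thm:main} by separately establishing achievability (lower bounds) via an explicit construction and converse (upper bounds) via three distinct information-theoretic arguments, one for each regime of $\optmi$. For achievability, all bounds are to be obtained by analyzing \cref{algo:main}, which I expect to take the form of a sequential-testing or batched elimination scheme: maintain a pool of $k\asymp t\Delta^2$ randomly selected arms, sample each $\asymp 1/\Delta^2$ times, and apply a constant-separation test against threshold $\tfrac{1}{2}$, outputting an arm that passes the test and falling back to a uniformly random arm from the untested complement otherwise. The success probability decomposes as $\Pr(\optarm\in\text{pool})\times\Pr(\text{correct identification}\mid\text{in pool}) + 1/n$, giving $\optsp\asymp (1+t\Delta^2)/n$ once the thresholds are tuned to obtain constant false-positive/false-negative rates per arm. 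The mutual information $I(\optarm;\calH_t)=\log n-\En[H(\optarm\mid\calH_t)]$ is then computed by tracking the three-component posterior (singleton on a correctly identified arm, concentration on a false positive, or uniform residual on the unsampled complement); the resulting entropy expression breaks into three cases according to whether $t\Delta^2\lessgtr 1$ and $\lessgtr\log n$, matching \eqref{eqn:thm-main}.

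The converse on $\optsp$ follows the standard bandit change-of-measure: under hypothesis $\optarm=i$, Pinsker's inequality gives $\Pr_i(a_{t+1}=i)-1/n\lesssim \sqrt{\Delta^2\,\En_0[T_i]}$, and averaging over $i$ with Cauchy--Schwarz yields $\optsp\lesssim (1+t\Delta^2)/n$ throughout $t\le n/\Delta^2$. The converse on $\optmi$ splits into three arguments. For the linear regime $t\Delta^2\le 1$, I plan to combine the chain-rule bound $I(\optarm;\calH_t)\le \sum_s 2\Delta^2\,\En[p_s(1-p_s)]$, with $p_s=\Pr(\optarm=a_s\mid\calH_{s-1})$, with a ``near-uniform posterior'' estimate $\En[p_s(1-p_s)]\lesssim 1/n+\sqrt{I(\optarm;\calH_{s-1})}$ obtained from Pinsker against the uniform prior, and to solve the resulting recursion to obtain $\optmi\lesssim t\Delta^2/n$. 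For the saturated regime $t\Delta^2\ge \log n$, a counting argument restricts the number of arms whose posterior is significantly concentrated to $O(t\Delta^2)$, so the conditional entropy satisfies $H(\optarm\mid\calH_t)\ge (1-O(t\Delta^2/n))\log n$, which immediately gives $\optmi\lesssim t\Delta^2\log n/n$.

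The intermediate quadratic regime $(t\Delta^2)^2/n$ for $1<t\Delta^2\le \log n$ is the principal obstacle, since neither of the two converses above is tight there---the per-sample chain rule overshoots by a factor of $\log n$, while the ``few identified arms'' bound undershoots by the same factor. Closing this regime is, I expect, where the paper must introduce new information-theoretic machinery that goes beyond the tools of \citet{rajaraman2024statistical}, as anticipated in the introduction. My approach would be to refine the per-sample analysis via a potential-function argument that tracks not only the running mutual information $I_s$ but also a second-moment quantity $\En[\sum_i(p_s^i)^2]$ measuring how concentrated the posterior has become so far. Heuristically, each sample can increase this concentration at rate $\Delta^2/n$ times the current ``warm-arm'' mass, and the warm-arm mass itself grows from $1/n$ at rate $\Delta^2$, so integrating the coupled dynamics over $t$ rounds should yield the quadratic growth $(t\Delta^2)^2/n$. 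Making this rigorous in the adaptive interactive setting, where posterior updates are correlated across rounds, is where I expect the main technical difficulty of the theorem to lie.
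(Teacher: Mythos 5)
Your proposal has genuine gaps in both directions, and the most serious one is in the converse for $\optsp$, on which everything else hinges. The standard change-of-measure argument you propose gives, after Pinsker and Cauchy--Schwarz, $\optsp \le \frac1n + \frac1n\sum_i\sqrt{\Delta^2\,\En_0[T_i]} \le \frac1n + \sqrt{\frac{t\Delta^2}{n}}$, and $\sqrt{t\Delta^2/n} \gg \frac{t\Delta^2}{n}$ throughout the regime $t\Delta^2 \ll n$ (e.g.\ at $t=1/\Delta^2$ it gives $1/\sqrt n$ versus the true $2/n$). The paper proves this is not a fixable slack: \cref{prop:two-point-LB} exhibits a learner for which the two-point quantity is genuinely $\Omega\pth{(\frac tn\wedge\sqrt{t/n})\Delta}$, so no choice of reference measure rescues this route. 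The paper's actual converse for $\optsp$ is a reduction/boosting argument (\cref{prop:boosting}): if $\optsp$ were too large at small $t$, running the $t$-round algorithm $m\asymp n/(t\Delta^2)$ times and feeding the candidates into \cref{algo:main} would violate the known bound $p^\star_{cn/\Delta^2}\le 1-\Omega(1)$ at the full sample complexity. This tight $\optsp$ converse is then the engine for the $\optmi$ converse at small $t$ via $\optmi\lesssim\Delta^2\sum_{s<t}p_s^\star$ (\cref{lem:converse-accumulation}), which immediately yields both the linear bound ($p_s^\star\asymp 1/n$) and the quadratic bound ($\sum_s s\Delta^4/n\asymp(t\Delta^2)^2/n$). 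Your alternative recursion $\En[p_s]\lesssim 1/n+\sqrt{I_{s-1}}$ does not close: with $I_{s-1}\asymp s\Delta^2/n$ the Pinsker term is $\sqrt{s}\Delta/\sqrt n\gg 1/n$ already for $s\Delta^2\gg 1/n$, and summing it gives $I_t\lesssim 1/\sqrt n$ at $t=1/\Delta^2$ instead of $1/n$. This looseness of information-to-probability conversions is exactly the separation phenomenon the paper highlights (\cref{prop:subopt_mutual_info}), so your proposed potential-function attack on the quadratic regime, which still routes through the posterior, would hit the same wall; the paper's resolution is to reverse the direction and use the (combinatorially proved) $\optsp$ bound to control $\optmi$.

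On achievability, the batched scheme you describe---pool of $k\asymp t\Delta^2$ arms, each sampled $\asymp 1/\Delta^2$ times with a \emph{constant} false-positive rate---cannot achieve $\optsp\asymp t\Delta^2/n$: a constant false-positive rate means $\Theta(k)$ suboptimal arms also pass the test, so the best arm is selected with probability only $\Theta(1/k)\cdot\frac kn=\Theta(1/n)$. Driving the false-positive rate down to $o(1/k)$ with a fixed per-arm budget costs $\Theta(\log k/\Delta^2)$ samples per arm, which shrinks the pool to $t\Delta^2/\log(t\Delta^2)$ and reproduces the non-interactive rate \eqref{eq:non-interactive}---the very gap the theorem is about. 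The paper's \cref{algo:main} is essentially a sequential probability ratio test: each arm's reward walk is pulled until it crosses a fixed lower threshold $-1/\Delta$, so a suboptimal arm is eliminated after only $O(1/\Delta^2)$ pulls \emph{in expectation} yet is never falsely accepted (it is only output if the budget expires mid-test), while the best arm survives forever with constant probability. This adaptive stopping is what removes the union-bound logarithm, and it is also what produces the three-phase mutual information curve (the quadratic regime comes from the growing probability of being ``stuck'' on the best arm). Your final converse regime ($t\Delta^2>\log n$) is directionally right but only heuristic; the paper makes it rigorous via a change-of-divergence chain $\KL\le(\log n)\Hels\le(\log n)\KL(\bar P\|P_{\cdot|\optarm})\lesssim \log n\cdot t\Delta^2/n$ against the all-suboptimal dummy model, rather than a posterior-support counting argument.
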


\begin{remark}
The same characterization also holds for the frequentist counterpart of $\optsp$, defined as
$\optspfreq = \sup_{\alg}\min_{\optarm\in [n]} \bP_{\optarm}(a_{t+1} = \optarm).$
This follows from $\optspfreq \le \optsp$, and that the achievability results in \cref{algo:main} achieve a frequentist guarantee. However, the definition of the mutual information $\optmi$ does not directly extend to the frequentist setting.
\end{remark}
\begin{remark}\label{remark:literature}
The achievability results for $\optsp$ can be obtained using an algorithm which randomly samples $\lceil t\Delta^2\rceil$ arms and runs a bandit algorithm with optimal regret on them (such as the median elimination algorithm in \cite{even2002pac}). However, it is unclear whether such an algorithm can attain the achievability results for $\optmi$.
\end{remark}

In comparison, in the non-interactive case, the corresponding quantities are
\begin{align}\label{eq:non-interactive}
\optspnonint \asymp \frac{t\Delta^2}{n\log(1+t\Delta^2)}, \qquad \optminonint \asymp \frac{t\Delta^2}{n},
\end{align}
whenever $t\le \frac{n\log n}{\Delta^2}$. For completeness we include the proof of \eqref{eq:non-interactive} in \cref{sec:non-interactive}.
Comparing \eqref{eqn:thm-main} and \eqref{eq:non-interactive}, we see that while $\optspnonint$ is slightly sublinear in $t$ (due to the logarithmic factor on the denominator), $\optsp$ becomes precisely linear in $t$ after exiting the easy regime $\optsp \asymp \frac{1}{n}$. As will become evident in our algorithm, this difference arises because, in the interactive case, the learner can strategically sample suboptimal arms fewer times.

The evolution of information in the interactive case is more intriguing. While $\optminonint$ grows linearly in $t$, the growth of $\optmi$ features three distinct transitions at $t\Delta^2 \asymp 1, \log n, n$. Intuitively, since $\Theta(1/\Delta^2)$ pulls of an arm estimate its mean reward within accuracy $\Delta$ with a constant probability, we define $m:=t\Delta^2 \in [0,n]$ as the ``effective'' number of arms pulled by the algorithm. Depending on $m$, the evolution of $\optmi$ follows four distinct regimes:
\begin{enumerate}[leftmargin=6mm]
  \setlength{\parsep}{0pt}
    \setlength{\parskip}{0pt}
    \item \emph{First linear regime $m\le 1$}: In this early stage, the time is too limited to learn even a single arm. The optimal strategy is to query a single arm chosen uniformly at random, making the process non-interactive. Consequently, the growth of $\optmi$ is identical in both the interactive and non-interactive cases, exhibiting a linear dependence on $t$.
    \item \emph{Quadratic regime $1< m\le \log n$}: In this intermediate regime, the time budget suffices to confidently learn one arm but not to achieve $(1-1/n)$ confidence. Interaction now plays a crucial role:
    the learner observes the arm’s performance and decides whether to keep pulling it for more confidence or switch to a new arm.
    The optimal strategy is to stick with the current arm if preliminary estimates suggest it is promising, otherwise switching to explore a different arm. This strategy ensures that the information gain $I_{t+1}^\star - \optmi$ is proportional to the probability of pulling the best arm, which increases with $t$ thanks to interaction. As a result, $\optmi$ exhibits quadratic growth with $t$.
    \item \emph{Second linear regime $\log n< m\le n$}: In this regime, the best arm can be identified with high confidence if pulled, and pulling it yields diminishing returns in terms of additional information. The total information gain is determined by the probability of identifying the best arm within the time budget, which scales as $m/n$ and again linear in $m$ (and $t$). Compared with the first linear regime, the slope here benefits from an additional $\Theta(\log n)$ factor, for the best arm can now provide $\Theta(\log n)$ bits of information once pulled, again thanks to interaction.
    \item \emph{Saturation regime $m>n$}: In the final regime, the learner can reliably identify the best arm, so the quantity $\optmi$ saturates at its maximum value $\Theta(\log n)$, the Shannon entropy of $\optarm\sim \Unif([n])$. 
\end{enumerate}

Finally, we examine the relationship between learning and information accumulation. A classical inequality between $\optsp$ and $\optmi$ is the Fano's inequality \citep{fano1968transmission}, which in our setting can be expressed as 
\begin{align}\label{eq:fano}
\optsp \le \frac{\optmi + h_2(\optsp)}{\log n},
\end{align}
where $h_2(p):= p\log \frac{1}{p} + (1-p)\log \frac{1}{1-p}$ is the binary entropy function. Using the upper bound $h_2(p)  \le p\log \frac{1}{p} + p$, this simplifies to:
\begin{align}\label{eq:fano_simpler}
\optsp \log \frac{n\optsp}{e}\le \optmi.
\end{align}
From \eqref{eqn:thm-main} and \eqref{eq:non-interactive}, it is clear that the relationship \eqref{eq:fano} (or \eqref{eq:fano_simpler}) is tight in the non-interactive case, and in the interactive regimes where $t\Delta^2 = O(1)$ or $t\Delta^2 = n^{\Omega(1)}$. However, in the intermediate regime of the interactive case where $t\Delta^2 \gg 1$ and $\log(t\Delta^2) = o(\log n)$, Fano's inequality becomes strictly loose. This looseness is not specific to Fano's inequality but applies more broadly to mutual information. Notably, there exists an algorithm in this regime that achieves the optimal success probability $\optsp$ while accumulating strictly less information than $\optmi$ (see \cref{sec:discussion:separation}). This indicates that the success probability $\optsp$ cannot always be inferred from mutual information alone. This surprising observation suggests that mutual information, while powerful, may be insufficient to fully characterize the fundamental limits of learning in interactive decision making.

The potential looseness of Fano's inequality also introduces new challenges on the technical side. For certain values of $t$, we require tools beyond mutual information to establish tight converse results for $\optsp$. Existing approaches fall short, particularly when aiming to prove a small success probability (see \cref{sec:discuss:existing}). To address these gaps, we propose the following technical innovations for the converse:
\begin{enumerate}[leftmargin=6mm]
  \setlength{\parsep}{0pt}
    \setlength{\parskip}{0pt}
    \item To upper bound the success probability $\optsp$, we devise a reduction scheme which relates $\optsp$ for small $t$ to $\optsp$ for large $t$, and utilize the classical result $\optsp \le 1 - c$ for some constant value of $c>0$ when $t = \frac{n}{\Delta^2}$. A noteworthy feature of this reduction is the use of the same algorithm employed in the achievability results as a component of the reduction itself, a novel interplay between these two facets of the analysis.
    \item To upper bound the mutual information $\optmi$, for $t\le \frac{\log n}{\Delta^2}$ we critically leverage the upper bound of $\optsp$ (established via the aforementioned reduction) to constrain the information gain. This presents an intriguing contrast to Fano's inequality, where $\optmi$ is typically used to upper bound $\optsp$; here, we reverse the roles and use $\optsp$ to bound $\optmi$. For large $t$ we use a simple but powerful change-of-divergence technique to obtain the additional $\Theta(\log n)$ factor.
\end{enumerate}

\subsection{Related work}

\paragraph{Multi-armed bandits.}
The multi-armed bandit problem dates back to \citep{robbins1952some,lai1985asymptotically}. Early algorithms like UCB and EXP3 achieved an $\widetilde{O}(\sqrt{nT})$ minimax regret bound, which is tight up to logarithmic factors \citep{auer1995gambling,auer2002finite,auer2002non}. \citep{audibert2009minimax} first removed this extra logarithmic factor---the key difference between interactive and non-interactive settings---via a new potential (INF policy) or an optimistic upper confidence bound (MOSS algorithm). Extensions of these algorithms, such as the anytime variant \citep{degenne2016anytime} and best-of-both-worlds guarantees \citep{zimmert2019optimal}, are also available in the literature. However, a deeper understanding of the trajectories of these algorithms in the initial learning period is still missing, and the tight separation between non-interactive and interactive algorithms largely remains a mystery for general decision making after a rich line of DEC developments \citep{foster2021statistical,foster2022complexity,foster2023tight,chen2024assouad}. A similar mystery holds for the mutual information: for example, while information-directed sampling \citep{russo2018learning} seeks to maximize the information gain in the initial steps, its analysis critically relies on the information ratio \citep{russo2016information} and does not provide insights into the evolution of information.

\paragraph{Best arm identification.} Our problem formulation in \eqref{eq:reward_distribution}, modulo the specific prior $\optarm\sim \Unif([n])$,
aligns with best arm identification in multi-armed bandits.  Algorithms based on various principles such as the frequentist method of UCB \citep{bubeck2011pure} and arm elimination \citep{audibert2010best} or the Bayesian method of knowledge gradient \citep{frazier2008knowledge} and Thompson sampling \citep{russo2016simple}, have been proposed for best arm identification. In particular, the asymptotic complexity in the fixed confidence setting has been completely characterized in \citep{garivier2016optimal}, and progress has also been made on the fixed budget setting \citep{carpentier2016tight,kato2022best,komiyama2022minimax}. The stopping rule used in our \cref{algo:main} is inspired by this body of work and relies on the sequential probability ratio test dating back to Wald \citep{wald1945sequential}. Under our problem formulation, it is also known that the ``median elimination algorithm'' in \citep{even2002pac,mannor2004sample} achieves the optimal sample complexity without the $\log n$ factor. However, most existing guarantees for best arm identification are inapplicable for small time horizons $t$ or when the error probability is $1-o(1)$. For example, although \citep{audibert2010best} established a lower bound of $\exp(-{(c+o(1))t\Delta^2}/{n})$ on the error probability specialized to our setting, the $o(1)$ factor becomes negligible only when $t \geq n^2 / \Delta^2$. As another example, the error probability upper bound $\exp(-(c+o(1))t/(H\log n))$ in the fixed budget setting of \citep{komiyama2022minimax}, with $H\asymp \frac{n}{\Delta^2}$ in our setting, has an extra $O(\log n)$ factor and requires a large $t\gg n^2$. Similar requirements for large $t$ are also present in the results of \citep{katz2020true,zhao2023revisiting,carpentier2016tight,karnin2013almost,atsidakou2022bayesian}. In contrast, our work establishes the tight probability of success for small $t\le \frac{n}{\Delta^2}$ as well, and identifies curious phase transitions in the mutual information behind the large error probabilities.

\paragraph{Feedback communication and noisy computation.} The objectives of minimizing error probability and maximizing mutual information align closely with concepts from feedback communication, a classical topic in information theory \citep{burnavsev1980sequential, tatikonda2008capacity}. For instance, \citep{burnavsev1980sequential} established upper bounds on mutual information that reveal two distinct phases in interactive environments, which parallel the ``burn-in phase'' and ``learning phase'' in our problem. Although interactive decision making operates under a more constrained model than communication systems—learners can only pull one of $n$ arms rather than utilize an arbitrary encoder—the perspective in \citep{burnashev1976data} has proven valuable in addressing recent noisy computation challenges, such as noisy sorting \citep{wang2024noisy}, particularly for deriving converse results. However, unlike in feedback communication, our multi-armed bandit problem does not always exhibit linear growth in mutual information during the ``burn-in phase''.

Our problem can also be framed as a novel noisy computation task, where the learner would like to locate the maximum of a random permutation of $((1+\Delta)/2, (1-\Delta)/2, \dots, (1-\Delta)/2)$ through noisy queries. Recent years have seen a resurgence of interest in such problems, revisiting classical questions in noisy sorting \citep{wang2024noisy,gu2023optimal} and the noisy computation of threshold \citep{wang2024noisy2,gu2025tight}, MAX, and OR functions \citep{zhu2024noisy}. These works often leverage a powerful converse technique from \citep{feige1994computing}, which reduces interactive environments to a two-phase process comprising a non-interactive phase and an interactive phase that returns the clean output. We will show in \cref{sec:discuss:existing} that this approach does not directly succeed in our problem. Our converse results on success probability are derived using a distinct reduction method. 
\looseness=-1

\paragraph{Converse techniques.} Beyond the techniques in \citep{burnavsev1980sequential, feige1994computing} discussed earlier, we review additional methods used to establish converse results in the statistics and bandit literature. The most common approach for proving regret lower bounds in multi-armed bandits relies on a change-of-measure argument \citep{lai1985asymptotically} (see also \citep[Chapter 15.2]{lattimore2020bandit} for minimax lower bounds and \citep{simchowitz2017simulator} for more advanced treatments). However, as these methods are special cases of Le Cam's two-point method, the resulting lower bound on error probability cannot exceed $1/2$ (achievable by a random coin flip). Even when generalized to test multiple hypotheses, as we show in \cref{sec:discuss:existing}, this approach still yields a weaker converse result $\optsp = O(1/n + (\sqrt{t/n}\wedge (t/n))\Delta)$.

Controlling mutual information can overcome the limitations of the two-point method. Despite that early works \citep{agarwal2012information,raginsky2011information,raginsky2011lower} showed that the amount information acquired by an \emph{interactive} algorithm could be harder to quantify, this approach has been revisited recently in the interactive framework \citep{rajaraman2024statistical, chen2024assouad}. For instance, \citep{chen2024assouad} extended the idea of \citep{chen2016bayes} to develop an algorithmic version of Fano's inequality for interactive settings, but did not address the critical problem of bounding mutual information. This challenging task was tackled in \citep{rajaraman2024statistical} in the context of ridge bandits using an induction argument that required the success probability at each step to be exponentially small, allowing the application of a union bound. However, this approach fails for multi-armed bandits, as even a random guess achieves a success probability of $\Omega(1/n)$ at each step, which is not small enough to apply union bound. This is the high-level reason why such an argument can only be applied for small $t\le \frac{\log n}{\Delta^2}$; for larger $t$, a different technique—based on a \emph{change-of-divergence} argument—provides the correct upper bound on mutual information. While conceptually simple, this method offers the first known proof (to the authors' knowledge) of lower bounds in multi-armed bandits
using mutual information and Fano's inequality.

\subsection{Organization}
\tikzstyle{result} = [trapezium, trapezium left angle=70, trapezium right angle=110, minimum width=3cm, minimum height=1cm, text centered, draw=black, fill=blue!30]

\begin{figure}
  \centering
  \begin{tikzpicture}[
    node distance = 6mm and 7mm,
    arr/.style = {-stealth, thick},
    box/.style = {rectangle, rounded corners, minimum width=3cm, minimum height=1cm, text centered, draw=black},
    every text node part/.style={align=center},
    scale=0.8, transform shape
  ]
    \node (algorithm) [box] {Algorithm \\ \cref{sec:achieve-success}};
    \node (achieve-p) [box, above right=-2mm and 7mm of algorithm, opacity=0.5] {Achievability for $\optsp$ \\ \cref{sec:achieve-success}};
    \node (achieve-i) [box, below right=-2mm and 7mm of algorithm] {Achievability for $\optmi$ \\ \cref{sec:achieve-info}};
    \node (converse-p-large) [box, right=of achieve-p] {Converse for $\optsp$, large $t$ \\ \cref{sec:converse-large-t}};
    \node (converse-i-large) [box, right=of achieve-i] {Converse for $\optmi$, large $t$ \\ \cref{sec:converse-large-t}};
    \node (converse-p-small) [box, right=of converse-p-large] {Converse for $\optsp$, small $t$ \\ \cref{sec:converse-small-t:success-prob}};
    \node (converse-i-small) [box, right=of converse-i-large] {Converse for $\optmi$, small $t$ \\ \cref{sec:converse-small-t:mutual-info}};

    \draw[arr] (algorithm) -- (achieve-p.west);
    \draw[arr] (algorithm) -- (achieve-i.west);
    \draw[arr] (converse-i-large) -- (converse-p-large);
    \draw[arr] (achieve-p) to [bend left=20] (converse-p-small);
    \draw[arr] (converse-p-large) -- (converse-p-small);
    \draw[arr] (converse-p-small) -- (converse-i-small);
  \end{tikzpicture}
  \caption{Dependency between different parts of our proof of \cref{thm:main}. We make one node opaque to highlight that achievability results for $p_t^\star$ are also available in the literature (cf. \Cref{remark:literature}).}
  \label{fig:tech:dependency}
\end{figure}

The rest of the paper is organized as follows. In \cref{sec:achieve-success}, based on the sequential probability ratio test, we introduce a simple algorithm (\cref{algo:main}) for identifying the best arm $\optarm$. Based on the theory of biased random walks, we show that this algorithm achieves the claimed success probability (in \cref{sec:achieve-success}) and mutual information (in \cref{sec:achieve-info}). \cref{sec:converse} establishes the converse results for $\optsp$ and $\optmi$, and the proof distinguishes into the cases of large $t$ and small $t$. For large $t$, we directly establish an upper bound of $\optmi$ and apply Fano's inequality to upper bound $\optsp$. The converse analysis of small $t$ is more involved, where the upper bound of $\optsp$ is proven via a reduction to the case of large $t$, with the help of the same algorithm in \cref{sec:achieve-success}. Furthermore, this upper bound of $\optsp$ is crucially used in an information-theoretic argument to upper bound $\optmi$. Dependency between different parts of our proof of \cref{thm:main} is shown in \cref{fig:tech:dependency}.

We provide some discussions in \cref{sec:discuss}. Specifically, we establish the suboptimality of several existing approaches for the converse in \cref{sec:discuss:existing}, and prove the separation between learning and information gain in \cref{sec:discussion:separation}.

\section{Achievability} \label{sec:achieve-success}
In this section we show a simple algorithm can strategically sample suboptimal arms fewer times and achieve the success probability lower bounds in \cref{thm:main}. This algorithm is based on the sequential probability ratio test for $H_0: r \sim \Ber\pth{\frac{1-\Delta}{2}}$ against $H_1: r\sim \Ber\pth{\frac{1+\Delta}{2}}$, described in \cref{algo:main}.

\begin{algorithm}[ht]
  \caption{\textsc{SequentialProbabilityRatioTest}($A,t,\Delta$)}
  \label{algo:main}
  \begin{algorithmic}[1]
  \State \textbf{input:} action set $A$, number of rounds $t$, noise parameter $\Delta$
  \State \textbf{output:} an estimate of the best arm $\wh{a}\in A$
  \State Permute $A$ uniformly at random. Relabel elements of $A$ as $1,\ldots,n$ where $n=\abs{A}$.
  \State $\theta \gets -1/\Delta, s \gets 0$
  \For{$i=1$ to $n$}
    \State $X^i\gets 0$
    \While{true}
      \If{$s = t$}
        \Return $\wh{a}=i$
      \EndIf
      \State Pull action $i$ and receive reward $r_t^i\in \{0,1\}$
      \State $X^i \gets X^i + 2 r_t^{i} - 1, s \gets s+1$ \Comment{Random walk for $X^i$}
      \If{$X^i \le \theta$}
        \textbf{break}
      \EndIf
    \EndWhile
  \EndFor
  \State \Return $\wh{a}=1$
  \end{algorithmic}
\end{algorithm}

Under Bernoulli rewards, the sequential probability ratio test is a biased random walk for each arm, with bias $\Delta$ for the best arm and bias $-\Delta$ for all suboptimal arms. \cref{algo:main} eliminates an arm once its walk drops below the threshold $\theta=-1/\Delta$, and keeps pulling the arm otherwise. The key property of biased random walks is that, it only takes $O(1/\Delta^2)$ steps in expectation for a suboptimal arm to reach the threshold $\theta$, while with $\Omega(1)$ probability the best arm never hits $\theta$. This is a consequence of the following standard results on stopping times of a biased random walk (e.g., \citep{feller70introduction}). 
\looseness=-1

\begin{lemma}[Stopping time of biased random walk] \label{lem:biased-random-walk}
  Let $\theta<0$ and $0<\Delta<1$.
  Let $(X_t)_{t\ge 0}$ be a random walk starting from $0$ with i.i.d.~steps drawn from some distribution $D$, which is either $D_- = \frac{1-\Delta}2 \delta_1 + \frac{1+\Delta}2 \delta_{-1}$ or $D_+ = \frac{1+\Delta}2 \delta_1 + \frac{1-\Delta}2 \delta_{-1}$.
  Let $T\in \mathbb{N}\cup \{+\infty\}$ be the stopping time for $X_T\le \theta$. Then the following statements hold:
  \begin{enumerate}[label=(\alph*),leftmargin=7mm]
    \setlength{\parsep}{0pt}
    \setlength{\parskip}{0pt}
    \item \label{item:lem-biased-random-walk:downward} For the downward random walk $D = D_-$, $\bE T \le -\frac{\theta}{\Delta}$;
    \item \label{item:lem-biased-random-walk:upward} For the upward random walk $D = D_+$, $\bP(T<\infty) \le \pth{\frac{1-\Delta}{1+\Delta}}^{-\theta} \le \exp(2\Delta\theta)$.
  \end{enumerate}
\end{lemma}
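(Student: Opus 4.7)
Both parts are classical applications of optional stopping to biased random walks, and the plan is to identify the right martingale in each case.

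For part~\ref{item:lem-biased-random-walk:downward}, I would apply Wald's identity via the linear martingale $Y_t := X_t + \Delta t$. Since the step distribution $D_-$ has drift $-\Delta < 0$, a standard Hoeffding estimate shows $X_t$ concentrates around $-\Delta t$, giving $T$ exponential tails and in particular $\bE[T] < \infty$. The process $Y_t$ is a zero-mean martingale with increments bounded by $1+\Delta$, so optional stopping at $T \wedge n$ together with dominated convergence yields $\bE[X_T] = -\Delta\,\bE[T]$. Combining with $X_T \le \theta$ at the stopping time gives $\bE[T] \le -\theta/\Delta$ as claimed.

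For part~\ref{item:lem-biased-random-walk:upward}, the natural tool is the geometric (exponential) martingale. Set $\lambda := \frac{1-\Delta}{1+\Delta} \in (0,1)$; a one-line check gives $\bE[\lambda^{\xi_1}] = \frac{1+\Delta}{2}\lambda + \frac{1-\Delta}{2}\lambda^{-1} = 1$, so $M_t := \lambda^{X_t}$ is a nonnegative martingale with $M_0 = 1$. Optional stopping at $T \wedge n$ gives $\bE[M_{T \wedge n}] = 1$ for every finite $n$. I would then pass $n \to \infty$ and split on the two events: on $\{T = \infty\}$ the upward walk satisfies $X_n \to +\infty$ almost surely, so $M_n \to 0$; on $\{T < \infty\}$, $M_{T \wedge n} \to M_T$. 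Fatou's lemma then yields $\bE[M_T \indic\{T < \infty\}] \le 1$. Because $\lambda < 1$ and $X_T \le \theta$ force $M_T \ge \lambda^{\theta}$, this gives $\bP(T < \infty) \le \lambda^{-\theta} = \bigl(\tfrac{1-\Delta}{1+\Delta}\bigr)^{-\theta}$. The final form $\le \exp(2\Delta\theta)$ reduces to the elementary inequality $\frac{1-\Delta}{1+\Delta} \le e^{-2\Delta}$ on $(0,1)$, which I would verify by a short Taylor/convexity check.

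The argument is essentially routine. The only points that need care are (i) justifying integrability of $T$ and dominated convergence in part~\ref{item:lem-biased-random-walk:downward}, and (ii) the asymmetric passage to the limit in part~\ref{item:lem-biased-random-walk:upward}, where it is crucial that $M_t$ vanishes on $\{T=\infty\}$ so that Fatou delivers exactly the one-sided inequality we need; neither step presents a genuine obstacle.
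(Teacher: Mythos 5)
The paper offers no proof of this lemma (it is cited as standard from Feller), so there is nothing to compare against line by line; your strategy --- Wald's identity for~(a) and the exponential martingale $M_t=\lambda^{X_t}$ with $\lambda=\frac{1-\Delta}{1+\Delta}$ plus optional stopping and Fatou for~(b) --- is exactly the canonical route. Part~(b) is correct as written, including the one-sided passage to the limit via $M_n\to 0$ on $\{T=\infty\}$ and the elementary bound $\frac{1-\Delta}{1+\Delta}\le e^{-2\Delta}$.

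Part~(a), however, ends with the inequality pointing the wrong way. Wald's identity gives $\bE[X_T]=-\Delta\,\bE[T]$, i.e.\ $\bE[T]=\bE[-X_T]/\Delta$, so the bound $X_T\le\theta$ yields $\bE[T]\ge -\theta/\Delta$ --- a \emph{lower} bound on $\bE[T]$, not the upper bound claimed. To bound $\bE[T]$ from above you need a \emph{lower} bound on $X_T$, i.e.\ control of the overshoot: since the steps are $\pm 1$ and the walk crosses $\theta$ from above, $X_T=\lfloor\theta\rfloor\ge\theta-1$, whence $\bE[T]\le(1-\theta)/\Delta$. (Indeed Wald gives the exact value $\bE[T]=-\lfloor\theta\rfloor/\Delta$, which for non-integer $\theta$ slightly exceeds the stated $-\theta/\Delta$; the clean constant holds only for integer $\theta$. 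The paper invokes the lemma with $\theta=-1/\Delta$, where the extra additive $1/\Delta\le 1/\Delta^2$ only perturbs constants, so nothing downstream is affected.) With that single step corrected, your argument is complete.
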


Based on \cref{lem:biased-random-walk}, we prove that \cref{algo:main} achieves the success probability lower bounds in \cref{thm:main}. We restate the result here for convenience.
\begin{theorem}[Achievability for success probability] \label{thm:achieve-success}
\cref{algo:main} achieves the success probability
  bounds in \cref{thm:main}. 
\end{theorem}

In the remainder of this section, we prove \cref{thm:achieve-success} for $t\le \frac{n}{\Delta^2}$, as a larger $t$ only makes learning easier. First, we restate the algorithm:
  \begin{enumerate}[leftmargin=6mm]
  \setlength{\parsep}{0pt}
\setlength{\parskip}{0pt}
    \item Permute the arms uniformly at random.
    \item For each arm $i$, the random walk $(X^i_j)_{j\ge 0}$ starts at $0$ with steps from $D_+$ if $i$ is the best arm, and from $D_-$ otherwise. All walks are independent.
    \item For each $i\in [n]$, let $T_i$ be the first time that $X^i_{T_i}\le \theta=-1/\Delta$. Return arm $i$ where $i$ is the smallest index such that $\sum_{k=1}^i T_k > t$. If no such $i$ exists (i.e.~all arms are eliminated), return arm $1$.\looseness=-1
  \end{enumerate}

  Now let $m = \lceil 0.1 t\Delta^2 \rceil \le n$ and define three events: let $\cE_1$ be the event that the best arm is among the first $m$ arms (after the random permutation);
  let $\cE_2$ be the event that $\sum_{1\le i\le m, i\neq i^\star} T_i \le t$, where $i^\star$ is the optimal arm after the random permutation;
  let $\cE_3$ be the event that $T_{i^\star} = \infty$.

  Clearly, when $\cE_1\cap \cE_2 \cap \cE_3$ holds, \cref{algo:main} outputs the best arm. It remains to lower bound the success probability $\bP(\cE_1\cap \cE_2 \cap \cE_3)$. Clearly $\bP(\cE_1) = \frac mn$, and $\bP(\cE_3^c | \cE_1) \le \exp(2\Delta \theta) = e^{-2} < 1/5$ by \cref{lem:biased-random-walk}. In addition, by Markov's inequality,
  $
    \bP(\cE_2^c | \cE_1) \le \frac{1}{t}\bE\qth{ \sum_{1\le i\le m, i\neq i^\star} T_i } \le \frac{m-1}{t\Delta^2} \le 0.1
  $
  by \cref{lem:biased-random-walk} and the definition of $m$. Therefore,
  \begin{align*}
    \bP(\cE_1\cap \cE_2 \cap \cE_3)  \ge \bP(\cE_1) \pth{1 - \bP(\cE_2^c|\cE_1) - \bP(\cE_3^c|\cE_1)} \ge \frac{0.7m}{n} = \Omega\pth{ \frac{1+t\Delta^2}{n} },
  \end{align*}
 which is our target. We defer the proofs of achievability results for mutual information to \cref{sec:achieve-info}.

\section{Converse} \label{sec:converse}
In this section, we prove converse results on $\optsp$ and $\optmi$ as illustrated in \cref{fig:tech:dependency}.
\subsection{Converse for large \texorpdfstring{$t$}{t}} \label{sec:converse-large-t}
This section establishes the following converse results for $\optmi$ and $\optsp$ for large $t$.
\begin{theorem}[Converse for large $t$] \label{thm:converse-large-t}
  The following statements hold under the setting of \cref{thm:main}:
  \begin{enumerate}[label=(\alph*),leftmargin=7mm]
    \setlength{\parsep}{0pt}
    \setlength{\parskip}{0pt}
    \item \label{item:thm-converse-large-t:I_t} For any $t\ge 0$, we have $\optmi \lesssim \frac{t \Delta^2 \log n} n \wedge \log n$.
    \item \label{item:thm-converse-large-t:p_t} For any $t\ge \frac{n^{\Omega(1)}}{\Delta^2}$, we have $\optsp \lesssim \frac{t \Delta^2} n \wedge 1$.
  \end{enumerate}
\end{theorem}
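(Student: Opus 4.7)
The plan is to establish part (a) first and then derive part (b) from it via Fano's inequality.

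\textbf{Deducing (b) from (a).} Given $\optmi \lesssim t\Delta^2 \log n/n$ from (a), Fano's inequality in the form \eqref{eq:fano_simpler} yields $\optsp \log(n\optsp/e) \leq \optmi$. If $\optsp \leq 1/n$, the target bound holds trivially since $t\Delta^2 \geq 1$ in the regime $t \geq n^{\Omega(1)}/\Delta^2$. Otherwise, the achievability from \cref{thm:achieve-success} gives $\optsp \gtrsim t\Delta^2/n \geq n^{\alpha-1}$ for some fixed $\alpha > 0$ (the exponent hidden in $n^{\Omega(1)}$), so $\log(n\optsp/e) \gtrsim \alpha \log n$. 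Dividing Fano by this logarithm yields $\optsp \lesssim t\Delta^2/n$, as desired. Notice the conceptual inversion relative to the ``small $t$'' converse: here we use the \emph{lower} bound on $\optsp$ (achievability) to unlock an \emph{upper} bound on $\optsp$ from the MI bound, rather than the reverse.

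\textbf{Plan for (a): change-of-divergence.} The variational upper bound for mutual information reads
\[\optmi \leq \frac{1}{n}\sum_{i=1}^n D_{KL}(P^i \| Q)\]
for any reference measure $Q$ on the space of histories, where $P^i$ denotes the law of $\mathcal{H}_t$ conditioned on $a^\star = i$ under the optimal algorithm. The ``change-of-divergence'' idea is to replace the natural choice $Q = \bar P = \frac{1}{n}\sum_i P^i$ (which recovers $\optmi$ and offers no chain-rule handle in interactive settings) with an alternative reference that admits a clean per-step decomposition. A natural first candidate is the null measure $P^0$ under which every reward is drawn from $\text{Ber}((1-\Delta)/2)$ independently of the arm; then the chain rule together with the identity $D_{KL}(\text{Ber}((1\pm\Delta)/2)\|\text{Ber}((1-\Delta)/2)) \lesssim \Delta^2$ gives $D_{KL}(P^i \| P^0) \lesssim \Delta^2 \cdot \mathbb{E}^i[N_i]$ with $N_i$ the number of pulls of arm $i$. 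Averaging reduces the problem to bounding the expected number of pulls of the true best arm:
\[\optmi \lesssim \Delta^2 \cdot \mathbb{E}_{a^\star, \alg}[N_{a^\star}].\]

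\textbf{Main obstacle.} Extracting the $\Theta(\log n)$ factor is the crux. Combining the above with $\mathbb{E}[N_{a^\star}] = \sum_s \mathbb{P}(a_s = a^\star) \lesssim \sum_s (1+s\Delta^2)/n$ (via achievability) yields $\optmi \lesssim (t\Delta^2)^2/n$; this is tight for $t\Delta^2 \leq \log n$ but loose by a factor of $t\Delta^2/\log n$ in the range $\log n < t\Delta^2 < n$. For $t\Delta^2 \geq n$ the trivial bound $\optmi \leq \log n \leq t\Delta^2\log n/n$ already suffices, so the actual work concerns the intermediate window $\log n < t\Delta^2 < n$. Closing this remaining gap is the main technical challenge, and I expect the resolution to require either (i) refining the reference measure---for instance a mixture $Q = (1-\eta)P^0 + \eta\bar P$ optimized over $\eta$, which injects an extra $\log(1/\eta)$ factor that, tuned to $\eta \asymp 1/n$, produces the desired $\log n$---or (ii) splitting the horizon into a ``verification'' phase of length $\Theta(\log n/\Delta^2)$ and an ``exploitation'' phase thereafter, and bounding the per-phase information contributions separately via the SPRT structure underlying \cref{algo:main}. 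Regardless of route, the hallmark of the change-of-divergence method is that the reference $Q$ used for the upper bound is deliberately different from the true marginal $\bar P$, trading a small amount of looseness for a tractable chain-rule analysis.
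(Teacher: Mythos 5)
Your reduction of (b) to (a) via Fano is sound and essentially matches the paper's (the paper argues contrapositively from ``$\optsp \ge t\Delta^2/n$'' rather than invoking achievability, but both correctly extract the $\Omega(\log n)$ factor from $\log(n\optsp/e)$ when $t\Delta^2\ge n^{\Omega(1)}$). The problem is part (a), which is the heart of the theorem: your proof stops at exactly the regime $\log n < t\Delta^2 < n$ where the bound is nontrivial, and neither of your two speculative fixes is the one that works. The obstruction you ran into is intrinsic to the \emph{direction} of your KL. With $Q=P^0$ the null measure, the chain rule gives $\KL(P^i\|P^0)\lesssim \Delta^2\,\bE^i[N_i]$, where the pull count of the best arm is averaged under $P^i$ --- and a good algorithm really does pull the best arm $\Theta(t^2\Delta^2/n)$ times in this regime, so $(t\Delta^2)^2/n$ is the true value of that quantity, not an artifact of sloppiness. (Also note that bounding $\bE^i[N_i]$ via $\bP(a_s=a^\star)\le p_{s-1}^\star\lesssim (1+s\Delta^2)/n$ uses the \emph{converse} for $\optsp$ at small $t$, which in the paper is itself derived from Theorem~\ref{thm:converse-large-t}; so that route is circular in addition to being loose.)

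The missing idea is to reverse the KL. The paper first writes $\optmi = \En_{\optarm}[\KL(P_{\cH_t|\optarm}\|P_{\cH_t})]$ and applies the inequality $\KL(P\|Q)\le (2+\log\supnorm{\rmd P/\rmd Q})\,\Hels(P,Q)$ (Yang--Barron); since $\optarm$ is uniform, the density ratio is at most $n$, and this single step is the source of the $\log n$ factor. It then uses the triangle inequality and convexity of Hellinger to replace $P_{\cH_t}$ by the dummy model $\bPbar_{\cH_t}$, and finally bounds $\Hels(P_{\cH_t|\optarm},\bPbar_{\cH_t})\le \KL(\bPbar_{\cH_t}\|P_{\cH_t|\optarm})$ --- the KL in the \emph{opposite} direction from yours. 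Now the chain rule puts the pull count under the dummy measure $\bPbar$, where by symmetry (and symmetrization over the uniform $\optarm$) the expected number of pulls of $\optarm$ is exactly $t/n$ for \emph{any} algorithm, with no recourse to any bound on $\optsp$. This yields $\En_{\optarm}[\KL(\bPbar_{\cH_t}\|P_{\cH_t|\optarm})]\lesssim t\Delta^2/n$ and hence $\optmi\lesssim (\log n)\cdot t\Delta^2/n$. Your mixture idea (i) does not achieve this because $\KL(P^i\|(1-\eta)P^0+\eta\bar P)$ still averages over $P^i$ and admits no clean per-step decomposition; the ``change of divergence'' the paper has in mind is a change of the divergence \emph{and} of its orientation, mediated by the Hellinger distance.
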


The remainder of this section is devoted to the proof of \cref{thm:converse-large-t}.

\paragraph{Mutual information.}
We apply a ``change-of-divergence'' argument to prove the converse for $\optmi$.
Recall that $\cH_t$ denotes the history up to time $t$ and $\optarm$ denotes the optimal arm.
For any fixed algorithm, we have
\begin{align*}
 I(\optarm;\cH_t) &= \En_{\optarm} \brk*{\KL\prn*{ P_{\cH_t|\optarm} \| P_{\cH_t}  } } \stepa{\le} (2+\log n) \En_{\optarm} \brk*{\Hels\prn*{ P_{\cH_t|\optarm} , P_{\cH_t}  } }\\
 &\stepb{\le} 2(2+\log n)\inf_{\widebar{P}_{\cH_t} }\pth{\En_{\optarm} \brk*{\Hels\prn*{ P_{\cH_t|\optarm} ,  \widebar{P}_{\cH_t} } } + \Hels\prn*{ \widebar{P}_{\cH_t}, P_{\cH_t}  } }\\
 &\stepc{\le} 4(2+\log n)\inf_{\widebar{P}_{\cH_t} }\En_{\optarm} \brk*{\Hels\prn*{ P_{\cH_t|\optarm} ,  \widebar{P}_{\cH_t} } } \stepd{\le} 4(2+\log n)\inf_{\widebar{P}_{\cH_t} }\En_{\optarm} \brk*{\KL\prn*{ \widebar{P}_{\cH_t} \| P_{\cH_t|\optarm}  } }
\end{align*}
where (a) uses \citep[Lemma 4]{yang1998asymptotic} and notes that thanks to $\optarm\sim \Unif([n])$, the density ratio of the two arguments in the KL divergence is upper bounded by $n$ almost surely, steps (b) and (c) use the triangle inequality and convexity of the Hellinger distance, respectively, and (d) follows from $\Hels(P,Q)\le \KL(P\|Q)$.

Now let $\widebar{P}_{\cH_t}$ be the dummy model where all arms have reward distribution $\Ber\prn*{ \frac{1-\Delta}{2} }$, and $\widebar{\En}$ be the expectation taken with respect to $\bPbar_{\cH_t}$.
By the chain rule of the KL-divergence,
\begin{align*}
     \En_{\optarm} \brk*{\KL\prn*{   \widebar{P}_{\cH_t} \| P_{\cH_t|\optarm} } } &= \En_{\optarm} \brk*{  \sum_{s=1}^t \widebar{\En} \brk*{ \KL\prn*{ \Ber\prn*{ \frac{1+\Delta}{2} } \bigg\| \Ber\prn*{ \frac{1-\Delta}{2} }  }  \indic \prn*{ a_s = \optarm }  }   }\\
     &\lesssim \Delta^2 \En_{\optarm} \brk*{ \sum_{s=1}^t \widebar{\En} \brk*{   \indic \prn*{ a_s = \optarm}  }   } = \frac{t\Delta^2}{n},
\end{align*}
where the first inequality follows by the assumption that $\Delta = 1- \Omega(1)$ and the second identity follows by the symmetry between all arms for both $\bPbar$ and $\optarm$.
Combining all above, we get
$
   \optmi \lesssim \frac{t\Delta^2 \log n}{n}.
$
The other upper bound $\optmi \le H(\optarm) = \log n$ is trivial. 

\paragraph{Success probability.}
Suppose that $t\ge \frac{n^c}{\Delta^2}$ for some constant $c>0$. By the upper bound of $\optmi$ and Fano's inequality \eqref{eq:fano_simpler}, we have
  $
    \optsp\log \frac{n \optsp}e \le \optmi \lesssim \frac{t \Delta^2 \log n} n.
  $
  If $\optsp \ge \frac{t \Delta^2}{n}$, then
  $
    \optsp\log \frac{n \optsp}e \ge \optsp \log \frac{n^c}e \gtrsim c \optsp \log n.
  $
  Combining both inequalities we conclude that $\optsp \lesssim \frac{t \Delta^2} {n}$, and $\optsp\le 1$ trivially. 

\subsection{Success probability for small \texorpdfstring{$t$}{t}} \label{sec:converse-small-t:success-prob}
This section establishes the converse results for  $\optsp$ in the entire range of $t\le n/\Delta^2$.

\begin{theorem}[Success probability converse for small $t$] \label{thm:converse-success-small-t}
  Under the setting of \cref{thm:main}, we have
  $
    \optsp \lesssim \frac{1+t\Delta^2}{n}
  $
  for $t\le \frac{n}{\Delta^2}$.
\end{theorem}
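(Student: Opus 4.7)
The plan is to prove the converse by contradiction via a reduction that turns any would-be algorithm $A$ achieving $p := \optsp > C(1+t\Delta^2)/n$ (for a sufficiently large constant $C$ to be chosen) into an algorithm $A'$ of budget at most $n/\Delta^2$ whose success probability exceeds the classical base-case bound $\optsp(n/\Delta^2) \le 1 - c^*$, where $c^* > 0$ is the constant guaranteed by the classical best-arm identification lower bound. The reduction uses the same algorithm from the achievability side, \cref{algo:main}, as its second stage, which is the promised interplay between the two facets of the analysis.

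Concretely, after symmetrizing $A$ over a uniformly random relabelling of arms so that $\bP(\hat a = a^* \mid a^*) = p$ for every $a^*$, I would define $A'$ in two phases. Phase~I runs $A$ independently $k := \lceil L/p \rceil$ times, each invocation consuming a fresh block of $t$ rounds; this is legitimate because the per-round arm distributions are stationary and independent across time, so the $k$ outputs $\hat a_1,\ldots,\hat a_k$ are conditionally i.i.d.\ given $a^*$. Consequently, $\bP(a^* \in S) \ge 1 - (1-p)^k \ge 1 - e^{-L}$, where $S := \{\hat a_1,\ldots,\hat a_k\}$. Phase~II runs \cref{algo:main} on the restricted action set $S$ with an enlarged SPRT threshold $\theta = -B/\Delta$ and budget $\Theta(Bk/\Delta^2)$, for a large constant $B$. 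A strengthened version of the argument in \cref{sec:achieve-success} then shows that, conditional on $a^* \in S$, this variant identifies $a^*$ with probability at least $1 - e^{-\Omega(B)}$. Choosing $L, B = \Theta(\log(1/c^*))$ gives $\bP(A' = a^*) \ge 1 - c^*/2 > 1 - c^*$, contradicting the base case.

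The remaining arithmetic is a budget check. Phases~I and II together use $kt + \Theta(Bk/\Delta^2) = \Theta(L(t + B/\Delta^2)/p)$ rounds, and since $t + B/\Delta^2 = O((1 + t\Delta^2 + B)/\Delta^2)$, the budget is at most $n/\Delta^2$ whenever $p > C(1+t\Delta^2)/n$ for any $C = \Omega(LB) = \Omega(\log^2(1/c^*))$, a universal constant. Thus every algorithm with success probability exceeding $C(1+t\Delta^2)/n$ at time $t$ would generate, through this reduction, an illegal algorithm at time $n/\Delta^2$, yielding the converse $\optsp \le C(1+t\Delta^2)/n$.

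The main obstacle is strengthening Phase~II past the universal-constant success probability guaranteed by \cref{thm:achieve-success}. The original proof uses the fixed threshold $\theta = -1/\Delta$ together with Markov's inequality on $\bE\sum_{j \ne i^*} T_j$, and cannot deliver better than a universal constant — which is insufficient to beat $1 - c^*$ for arbitrary $c^*$. The refinement I would carry out is twofold: (i) enlarging the threshold to $\theta = -B/\Delta$, so that the best arm's random walk survives with probability $\ge 1 - e^{-2B}$ by \cref{lem:biased-random-walk}\ref{item:lem-biased-random-walk:upward}; and (ii) replacing Markov's inequality by an exponential concentration bound on $\sum_{j \ne i^*} T_j$ that exploits the sub-exponential tails of the first-passage time $T_j$ for a biased simple random walk (e.g.\ via a finite moment generating function and Bernstein's inequality). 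Given these two upgrades, the remaining ingredients — the symmetrization justifying conditional independence of the $k$ runs of $A$, the budget arithmetic, and the invocation of the base case — are essentially routine.
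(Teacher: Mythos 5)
Your proposal is correct and follows the same core reduction as the paper: run the putatively-too-good algorithm repeatedly to build a small candidate set containing $\optarm$ with good probability, finish with \cref{algo:main} on that set, and compare the composite algorithm's budget and success probability against a converse at $t \asymp \frac{n}{\Delta^2}$ (this is exactly \cref{prop:boosting} and \cref{algo:boosting}). The genuine difference is in which base case you invoke and, consequently, how strong the boosting must be. You anchor the contradiction on the classical bound $\optsp \le 1-c^*$ at $t\asymp \frac{n}{\Delta^2}$, which forces you to push the composite success probability above $1-c^*$; this is why you must enlarge the SPRT threshold to $-B/\Delta$ and control $\sum_{j\ne i^\star}T_j$ beyond a universal constant. (In fact Bernstein-type concentration is overkill here: since $c^*$ is a fixed universal constant, Markov's inequality with budget $\Theta(Bk/(c^*\Delta^2))$ already drives the Phase~II failure below $c^*/8$, so your item (ii) can be dispensed with.) The paper instead inverts the quantifiers: its base case is \cref{thm:converse-large-t}\ref{item:thm-converse-large-t:p_t}, whose bound $\optsp\lesssim \frac{t\Delta^2}{n}$ is \emph{linear in $t$}, so shrinking the horizon to $c_3 n/\Delta^2$ makes the base-case success probability an arbitrarily small constant $c_2$. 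This lets the boosted algorithm get away with only an $\Omega(1)$ Phase~II guarantee --- the vanilla \cref{algo:main} with threshold $-1/\Delta$ and Markov's inequality --- at the price of depending on the mutual-information converse and Fano's inequality from \cref{sec:converse-large-t}. Your route removes that dependency (replacing it with the two-point/change-of-measure lower bound) but pays with a more delicate boosting analysis; both are valid, and your version arguably matches the informal description in the introduction more literally than the paper's own proof does. One small precision point: the two-point bound is vacuous at exactly $t=\frac{n}{\Delta^2}$, so you should state the classical base case as $p^\star_{c_0 n/\Delta^2}\le 1-c^*$ for a small universal $c_0$ and absorb $1/c_0$ into $C$.
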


\begin{algorithm}[t]
  \caption{\textsc{Boosting}($A,t,\Delta,m,\cA$)}
  \label{algo:boosting}
  \begin{algorithmic}[1]
  \State \textbf{input:} action set $A$, number of rounds $t$, noise parameter $\Delta$, boosting parameter $m$, an algorithm $\cA$ for best arm identification with time budget $t$
  \State \textbf{output:} an estimate of the best arm $\wh{a}\in A$
  \State $B \gets \emptyset$
  \For{$i$ from $1$ to $m$}
    \State Permute $A$ uniformly at random
    \State Let $a_i \gets \cA(A,t,\Delta)$ be the best arm estimate returned by algorithm $\cA$ in $t$ rounds \label{line:algo-boosting:pull-i}
    \State $B \gets B \cup \sth{a_i}$
  \EndFor
  \State Remove duplicate elements from $B$ and let $m'$ be the remaining size
  \State \Return $\wh{a}=\textsc{SequentialProbabilityRatioTest}(B,m'/\Delta^2,\Delta)$ \label{line:algo-boosting:pull-ii}
  \Comment{\cref{algo:main}}
  \end{algorithmic}
\end{algorithm}

The proof of \cref{thm:converse-success-small-t} is via a reduction from the success probability lower bound for large $t$ and the following boosting argument.
\begin{proposition} \label{prop:boosting}
  For any integers $t,m\ge 0$ it holds that
  $
    p_{m t + m/\Delta^2}^\star \gtrsim 1-(1-\optsp)^m.
  $
\end{proposition}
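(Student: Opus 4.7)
The plan is to analyze the \textsc{Boosting} algorithm (\cref{algo:boosting}) instantiated with $\cA$ equal to any algorithm achieving success probability $\Omega(\optsp)$ within $t$ rounds on $n$ arms. The first phase uses at most $mt$ queries (the $m$ independent invocations of $\cA$), and the second phase uses at most $m'/\Delta^2 \le m/\Delta^2$ queries (a single call to SPRT on the candidate set $B$), so the total budget respects $mt + m/\Delta^2$. It then remains to verify that the final output equals $\optarm$ with probability $\gtrsim 1-(1-\optsp)^m$, which I would split into two multiplicative factors.

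For the first factor, let $\cE = \{\optarm \in B\}$. Fix any realization of $\optarm$; the uniform random permutation performed inside each call of $\cA$ turns this fixed arm into a uniformly random label from $\cA$'s viewpoint, so by the very definition of $\optsp$ the $i$-th invocation of $\cA$ outputs $\optarm$ with probability exactly $\optsp$. Since the $m$ invocations use independent permutations and fresh reward samples, the events ``invocation $i$ outputs $\optarm$'' are mutually independent, yielding $\bP[\cE^c] = (1-\optsp)^m$.

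For the second factor, I would argue $\bP[\text{SPRT succeeds}\mid \cE] = \Omega(1)$. The key observation is that \cref{algo:main} internally re-permutes its input set uniformly at random and draws fresh rewards; thus, no matter how the first phase couples $\optarm$ with $B$, from SPRT's perspective $\optarm$ occupies a uniformly random position of $B$, which is exactly the setting of the achievability result \cref{thm:achieve-success}. Substituting $n \gets m'$ and $t \gets m'/\Delta^2$ into \cref{thm:achieve-success} gives success probability $\Omega((1+m')/m') = \Omega(1)$, uniform in $m' \ge 1$.

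Combining the two factors via the tower property delivers the desired bound
\[
  p_{mt+m/\Delta^2}^\star \;\ge\; \bP[\cE]\cdot \bP[\text{SPRT succeeds}\mid \cE] \;\gtrsim\; 1-(1-\optsp)^m.
\]
The main subtlety is that $\bP[\text{SPRT succeeds}\mid \cE]$ must be bounded below \emph{uniformly} over realizations of $B$, since the first phase may bias the conditional distribution of $B$ toward particular configurations; however, the internal uniform permutation inside \cref{algo:main} fully decouples the second phase from any side information about $\optarm$ leaked through the first phase, so no finer conditioning argument is required.
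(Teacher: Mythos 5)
Your proposal is correct and follows essentially the same route as the paper's proof: bound the query count phase by phase, use the independent re-permutations to get $\bP[\optarm\in B]\ge 1-(1-\optsp)^m$, and invoke \cref{thm:achieve-success} with $n\gets m'$ and $t\gets m'/\Delta^2$ to get an $\Omega(1)$ conditional success probability for the second stage. Your explicit remark that the internal permutation in \cref{algo:main} decouples the second phase from the conditional law of $B$ is a point the paper leaves implicit, but it does not change the argument.
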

\begin{proof}
  Suppose $\cA$ is an algorithm that achieves the optimal success probability $\optsp$ using $t$ pulls. Let $\cA_{\mathrm{boost}}$ be the boosting algorithm given in \cref{algo:boosting}, which runs algorithm $\cA$ $m$ times to obtain a candidate action set $B$ of size $m'\le m$, and then runs \cref{algo:main} on the action set $B$. We establish \cref{prop:boosting} by showing that $\cA_{\mathrm{boost}}$ always uses at most $m t + m/\Delta^2$ pulls and achieves success probability $\Omega\pth{1-(1-\optsp)^m}$.

  \paragraph{Number of pulls.}
  Pulls are used only in Lines \ref{line:algo-boosting:pull-i} and \ref{line:algo-boosting:pull-ii}.
  Line \ref{line:algo-boosting:pull-i} is executed $m$ times and each time uses $t$ pulls.
  Line \ref{line:algo-boosting:pull-ii} uses $m'/\Delta^2$ pulls with $m'\le m$.
  Therefore the total number of pulls is at most $m t + m/\Delta^2$.

  \paragraph{Success probability.}
  Because we permute the arms uniformly at random before each call in Line \ref{line:algo-boosting:pull-i}, the event that each call succeeds (i.e.~outputs the best arm $\optarm$) are independent.
  Let $\cE$ be the event that the optimal arm is in $B$, then $\bP[\cE] \ge 1-(1-\optsp)^m$.
  Conditioned on $\cE$, there is a unique optimal arm in $B$.
  By \cref{thm:achieve-success}, Line \ref{line:algo-boosting:pull-ii} succeeds with probability $\Omega(1)$.
  Therefore the overall success probability of $\cA_{\mathrm{boost}}$ is $\Omega\pth{1-(1-\optsp)^m}$.
\end{proof}

We are now ready to prove \cref{thm:converse-success-small-t}. By \cref{prop:boosting}, there exists $c_1>0$ such that $p_{m t+m/\Delta^2}^\star \ge c_1 (1-(1-\optsp)^m)$.
  By \cref{thm:converse-large-t}\ref{item:thm-converse-large-t:p_t}, for any $c_2>0$, there exists $c_3>0$ with $p_{c_3 n/\Delta^2}^\star \le c_2$. Take $c_2 = c_1/2$ and choose $c_3$ accordingly.

  Let $m = \lfloor c_3 n/(t \Delta^2 + 1) \rfloor$, so that $m t + m/\Delta^2 \le c_3 n/\Delta^2$. By the previous paragraph, we have
$
\frac{c_1}{2} \ge p_{c_3n/\Delta^2}^\star \ge c_1\pth{1 - (1-\optsp)^m} \ge c_1\pth{1 - e^{-m\optsp}}.
$
Solving this inequality gives that
$
\optsp \le \frac{\log 2}{m} \lesssim \frac{t\Delta^2 + 1}{n},
$
establishing \cref{thm:converse-success-small-t}.

\subsection{Mutual information for small \texorpdfstring{$t$}{t}} \label{sec:converse-small-t:mutual-info}
This section establishes the converse for $\optmi$ when $t\le \frac{\log n}{\Delta^2}$. Note that when $t>\frac{\log n}{\Delta^2}$, the converse result in \cref{thm:converse-large-t} is already tight.

\begin{theorem}[Mutual information converse for small $t$] \label{thm:converse-info-small-t}
  The following statements hold under the setting of \cref{thm:main}:
  \begin{enumerate}[label=(\alph*),leftmargin=7mm]
    \setlength{\parsep}{0pt}
    \setlength{\parskip}{0pt}
    \item \label{item:thm-converse-info-small-t:i} For $t\le \frac{1}{\Delta^2}$, we have $\optmi \lesssim \frac{t \Delta^2} n$.
    \item \label{item:thm-converse-info-small-t:ii} For $\frac{1}{\Delta^2} < t\le \frac{\log n}{\Delta^2}$, we have $\optmi \lesssim \frac{(t \Delta^2)^2} n$.
  \end{enumerate}
\end{theorem}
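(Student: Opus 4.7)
The plan is to establish an ``inverse Fano'' bound: rather than using $\optmi$ to control $\optsp$ (as in \eqref{eq:fano}), we leverage the already-proved converse $p_s^\star \lesssim (1+s\Delta^2)/n$ from \cref{thm:converse-success-small-t} to bound the per-round growth of mutual information, and then sum. This yields $\optmi \lesssim t\Delta^2/n + (t\Delta^2)^2/n$, and the two cases in the statement correspond exactly to which term dominates.

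Fix any algorithm. Since $a_s$ is $\cH_{s-1}$-measurable (modulo internal randomness independent of $\optarm$), the chain rule of mutual information collapses to
\begin{align*}
  I(\optarm;\cH_t) = \sum_{s=1}^{t} I\prn*{\optarm;\, r_s^{a_s} \mid \cH_{s-1},\, a_s}.
\end{align*}
Conditional on $(\cH_{s-1}, a_s)$, the reward $r_s^{a_s}$ depends on $\optarm$ only through the indicator $Y_s \ldef \indic\{\optarm = a_s\}$, so by the data processing inequality the $s$-th summand is at most $I(Y_s; r_s^{a_s} \mid \cH_{s-1}, a_s)$. This is a two-state channel: letting $p_{s-1} \ldef \bP(\optarm=a_s\mid \cH_{s-1})$, we have $Y_s\sim \Ber(p_{s-1})$, and the reward is $\Ber((1\pm\Delta)/2)$ according to $Y_s$. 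Because $\Delta=1-\Omega(1)$ keeps both conditional means bounded away from $\{0,1\}$, expanding the two KL terms in the definition of $I(Y_s; r_s^{a_s}\mid\cH_{s-1},a_s)$ and using $\KL(\Ber(a)\|\Ber(b))\asymp (a-b)^2$ gives
\begin{align*}
  I(Y_s; r_s^{a_s} \mid \cH_{s-1}, a_s) \lesssim \Delta^2\, p_{s-1}(1-p_{s-1}) \le \Delta^2\, p_{s-1}.
\end{align*}

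Taking the outer expectation, $\bE[p_{s-1}] = \bP(\optarm = a_s)$; and since $a_s$ is a $\cH_{s-1}$-measurable estimator of $\optarm$, this probability is at most $p_{s-1}^\star \lesssim (1+(s-1)\Delta^2)/n$ by \cref{thm:converse-success-small-t}. Summing over $s=1,\dots,t$ and taking the supremum over algorithms,
\begin{align*}
  \optmi \lesssim \sum_{s=1}^{t} \Delta^2 \cdot \frac{1+(s-1)\Delta^2}{n} \asymp \frac{t\Delta^2}{n} + \frac{(t\Delta^2)^2}{n},
\end{align*}
which gives \ref{item:thm-converse-info-small-t:i} when $t\le 1/\Delta^2$ and \ref{item:thm-converse-info-small-t:ii} when $1/\Delta^2 < t\le \log n/\Delta^2$.

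The main conceptual point is the direction reversal of Fano's inequality: rather than using $\optmi$ to bound $\optsp$, we feed the \emph{universal} upper bound on $p_s^\star$ (itself proved via the algorithm-based reduction in \cref{prop:boosting}) into the per-round information accounting of an \emph{arbitrary} algorithm. This is the delicate step, since it couples the two converses in one direction while keeping the supremum over algorithms outside the sum. Once this is in place the calculation is routine; the quadratic regime \ref{item:thm-converse-info-small-t:ii} emerges precisely because the per-round information gain grows linearly in the current success probability, an interactive phenomenon absent in the non-interactive setting where the analogous summand is constant in $s$ and only linear growth of $\optminonint$ results.
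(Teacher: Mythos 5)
Your proposal is correct and follows essentially the same route as the paper: the paper's \cref{lem:converse-accumulation} likewise uses the chain rule to write $\optmi$ as a sum of per-round gains, bounds each gain by $\Delta^2\,\bP(a_s=\optarm)\le \Delta^2 p_{s-1}^\star$ (via the variational representation of conditional mutual information with reference $\Ber((1-\Delta)/2)$, where you instead use data processing to the binary indicator and a direct two-point computation — an equivalent calculation), and then feeds in \cref{thm:converse-success-small-t}. No further comment is needed.
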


The proof of \cref{thm:converse-info-small-t} follows from \cref{lem:converse-accumulation} 
and \cref{thm:converse-success-small-t} by a simple calculation. 

\begin{lemma}
\label{lem:converse-accumulation}
For any $t$, it holds that $\optmi \lesssim \Delta^2 \sum_{s=0}^{t-1} p_s^\star$.
\end{lemma}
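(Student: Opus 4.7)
The plan is to decompose $I(\optarm;\cH_t)$ via the chain rule of mutual information and then bound each per-round information gain by $O(\Delta^2)\cdot\bP(\optarm=a_s)$. For any fixed algorithm, the chain rule gives
\[
I(\optarm;\cH_t)=\sum_{s=1}^t I(\optarm;(a_s,r_s^{a_s})\mid\cH_{s-1})=\sum_{s=1}^t I(\optarm;r_s^{a_s}\mid\cH_{s-1},a_s),
\]
where the second equality drops the $I(\optarm;a_s\mid\cH_{s-1})$ term because $a_s$ is a function of $\cH_{s-1}$ and private randomness independent of $\optarm$.

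Next, I would exploit the fact that $r_s^{a_s}$ depends on $\optarm$ only through the binary indicator $B_s:=\indic(\optarm=a_s)$: conditional on $(\cH_{s-1},a_s)$, we have $r_s^{a_s}\sim P_+:=\Ber((1+\Delta)/2)$ if $B_s=1$ and $r_s^{a_s}\sim P_-:=\Ber((1-\Delta)/2)$ otherwise. Consequently $I(\optarm;r_s^{a_s}\mid\cH_{s-1},a_s)=I(B_s;r_s^{a_s}\mid\cH_{s-1},a_s)$. Writing $q:=\bP(B_s=1\mid\cH_{s-1},a_s)$ and applying joint convexity of the KL divergence to the mixture $qP_++(1-q)P_-$ yields
\[
I(B_s;r_s^{a_s}\mid\cH_{s-1},a_s)\le q(1-q)\bigl(\KL(P_+\|P_-)+\KL(P_-\|P_+)\bigr)=2q(1-q)\Delta\log\frac{1+\Delta}{1-\Delta}\lesssim q\Delta^2,
\]
where the last bound uses the assumption $\Delta=1-\Omega(1)$.

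To finish, I would take expectation over $(\cH_{s-1},a_s)$ and apply the tower property: $\bE[q]=\bP(\optarm=a_s)\le p_{s-1}^\star$ by definition of the optimal success probability (note that $p_{s-1}^\star=\sup_{\alg}\bP(a_s=\optarm)$). Summing the per-round bounds over $s$ and then taking the supremum over algorithms on the left gives $\optmi\lesssim\Delta^2\sum_{s=1}^t p_{s-1}^\star=\Delta^2\sum_{s=0}^{t-1}p_s^\star$, as desired.

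The main subtlety lies in the convexity step: a crude bound such as $I(B_s;r_s^{a_s}\mid\cdot)\le h_2(q)$ would only give $\optmi\lesssim\sum_s h_2(p_{s-1}^\star)$, losing the crucial $\Delta^2$ factor. The joint-convexity argument must produce the $q(1-q)$ prefactor so that, after taking expectation, the term collapses precisely to $\bP(\optarm=a_s)$ and is controlled by $p_{s-1}^\star$. This is what allows the mutual information bound to be ``paid for'' by the running accumulation of success probabilities rather than by their entropies.
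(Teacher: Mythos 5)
Your proof is correct and follows essentially the same route as the paper: chain rule for mutual information, then a per-round bound of the form $O(\Delta^2)\,\bP(a_s=\optarm)\le \Delta^2 p_{s-1}^\star$, summed over rounds. The only (immaterial) difference is how the per-round term is extracted—you reduce to the binary variable $\indic(a_s=\optarm)$ and use joint convexity to get the $q(1-q)$ prefactor, whereas the paper uses the variational representation of conditional mutual information with the fixed reference $\Ber((1-\Delta)/2)$; both yield the same $\Delta^2\,\bP(a_s=\optarm)$ bound.
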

\begin{proof}
Given any learning algorithm, let $I_s = I(\optarm; \cH_s)$ be the mutual information accumulated by this algorithm until time $s$. In the sequel we upper bound the information gain $I_{s}- I_{s-1}$ using the upper bounds of $p_s^\star$ in \cref{thm:converse-success-small-t}.

By the chain rule of the mutual information, we have
$
 I_s - I_{s-1} = I(\optarm; r_s^{a_s}| \cH_{s-1}, a_s).
$
By the variational representation (e.g., \citep[Theorem 4.1]{polyanskiy2025information}) of the mutual information $I(X;Y|Z) =\min_{Q_{Y|Z}} \bE_{X,Z}\qth{ \KL\pth{ P_{Y|X,Z} \| Q_{Y|Z} } }$, we have
\begin{align*}
    I(\optarm; r_s^{a_s}| \cH_{s-1}, a_s) &\leq \mathbb{E}_{(\optarm, a_s,\cH_{s-1})} \left[ \KL \left( P_{r_s^{a_s}|\optarm,a_s,\cH_{s-1}} \bigg\| \Ber\pth{\frac{1-\Delta}{2}} \right) \right] \\
    &= \mathbb{E}_{(\optarm, a_s,\cH_{s-1})} \left[ \KL \left( \text{Ber}\left(\frac{1-\Delta}{2} + \Delta \indic(a_s = \optarm)\right) \bigg\| \Ber\pth{\frac{1-\Delta}{2}} \right) \right] \\
    &\lesssim \mathbb{E}_{(\optarm, a_s,\cH_{s-1})} \left[ \Delta^2 \indic(a_s = \optarm) \right] \le \Delta^2 p_{s-1}^\star.
\end{align*}
Summing over $s=1,\dots,t$ completes the proof.
\end{proof}

\section{Discussions}\label{sec:discuss}
\subsection{Failure of existing approaches for converse}
\label{sec:discuss:existing}
We comment on the failure of existing approaches in fully establishing the converse results in \cref{thm:main}. The standard bandit lower bound (see, e.g.~\citep[Chapter 15.2]{lattimore2020bandit}) uses binary hypothesis testing arguments, and a generalization to the uniform prior $\optarm\sim\Unif([n])$ (a variant of \citep[Lemma 3]{gao2019batched}) reads as
\begin{align}\label{eq:two-point}
\bP(a_{t+1} = \optarm) \le \frac{1}{n} + \inf_{\bP_0}\frac{1}{n}\sum_{i=1}^n \TV(\bP_0, \bP_i).
\end{align}
Here $\bP_i$ denotes the distribution of $\calH_t$ under $\optarm = i$, and $\bP_0$ is any reference distribution. By choosing $\bP_0$ to be the case where all reward distributions are $\Ber\pth{\frac{1-\Delta}{2}}$, it is easy to see that \eqref{eq:two-point} gives the tight bound $\optsp \le n^{-1}$ for $t=0$ and $\optsp = 1 - \Omega(1)$ for $t\asymp \frac{n}{\Delta^2}$. However, for intermediate values of $t$, \eqref{eq:two-point} only gives the following lower bound, which does not exhibit the correct scaling on $\Delta$.
\begin{proposition}\label{prop:two-point-LB}
For any $t\le \frac{n}{\Delta^2}$, there exists a learner such that $
\inf_{\bP_0}\frac{1}{n}\sum\nolimits_{i=1}^n \TV(\bP_0, \bP_i) = \Omega\pth{ \pth{\frac{t}{n}\wedge \sqrt{\frac{t}{n}}} \Delta}.
$
\end{proposition}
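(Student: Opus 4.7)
The plan is to exhibit two concrete non-adaptive learners -- one for $t\le n$ and one for $t>n$ -- and lower bound the average TV through pairwise distances by using the triangle inequality
\[
\inf_{\bP_0}\frac{1}{n}\sum_{i=1}^n\TV(\bP_0,\bP_i)\ \ge\ \frac{1}{2n^2}\sum_{i,j}\TV(\bP_i,\bP_j),
\]
which follows by summing $\TV(\bP_i,\bP_j)\le \TV(\bP_0,\bP_i)+\TV(\bP_0,\bP_j)$ over all pairs $(i,j)$. The task is then to construct a learner that makes the right-hand side as large as $\Omega\pth{n^2\cdot \pth{\frac{t}{n}\wedge\sqrt{\frac{t}{n}}}\Delta}$.

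For $t\le n$, I would use the learner that pulls $t$ distinct arms $S\subset [n]$ once each in a uniformly random order. A direct computation on the two-sample support of a pair of Bernoullis gives $\TV(\bP_i,\bP_j)=\Theta(\Delta)$ whenever $\{i,j\}\cap S\ne\emptyset$, and $\TV=0$ otherwise. There are $\Theta(tn)$ pairs in the former class, so the pairwise sum is $\Theta(tn\Delta)$ and dividing by $2n^2$ produces the target $\Omega(t\Delta/n)$.

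For $n<t\le n/\Delta^2$, I would use the round-robin learner that pulls each arm exactly $k:=\lfloor t/n\rfloor$ times. For $i\ne j$, $\bP_i$ and $\bP_j$ agree except that arm $i$'s and arm $j$'s $k$ observations swap between $\Ber\pth{(1+\Delta)/2}^{\otimes k}$ and $\Ber\pth{(1-\Delta)/2}^{\otimes k}$. By the swap symmetry $\bP_j=\bP_i\circ\tau$ together with the Neyman--Pearson characterization, the exact TV equals $\bP_i(S>0)-\bP_i(S<0)$, where $S=\sum_\ell (X^i_\ell-X^j_\ell)$ is a sum of $k$ iid $\{-1,0,+1\}$-valued variables with mean $\Delta$ and variance $\Theta(1)$. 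A local CLT / Berry-Esseen estimate then yields $\TV(\bP_i,\bP_j)\asymp\min(1,\sqrt{k}\Delta)=\sqrt{t/n}\,\Delta$ throughout the range $k\Delta^2\le 1$. Summing over the $\Theta(n^2)$ pairs and dividing by $2n^2$ delivers $\Omega(\sqrt{t/n}\,\Delta)$.

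The only nontrivial step is the sharp $\TV\asymp\sqrt{k}\Delta$ bound in the round-robin case, since Hellinger tensorization alone yields only $\TV\ge H^2/2\asymp k\Delta^2$ -- off by a factor of $\sqrt{k}\Delta$, and weaker than the $\Omega(\Delta)$ already obtained for $t\le n$. The CLT/Berry-Esseen argument on the sufficient statistic $S$ is the right tool, with the small-$k$ corner handled by direct enumeration (trivial when $k=1$, where $S$ has a $9$-point support). Beyond this, the triangle-inequality reduction and the pairwise TV bookkeeping are entirely routine, so I do not anticipate further obstacles.
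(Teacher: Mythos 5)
Your construction and reduction are the same as the paper's: the two learners (pull $t$ distinct arms once each when $t\le n$; round-robin $t/n$ pulls per arm when $t>n$) and the triangle-inequality passage from $\inf_{\bP_0}\frac1n\sum_i\TV(\bP_0,\bP_i)$ to pairwise distances $\TV(\bP_i,\bP_j)$ are exactly what the paper does, and your $t\le n$ case is complete as written. The only divergence is in the key quantitative step $\TV(\bP_i,\bP_j)\gtrsim\sqrt{k}\,\Delta$ for $k=t/n$ pulls per arm: the paper applies data processing to reduce to $\TV\prn{\Bin(k,\frac{1-\Delta}{2}),\Bin(k,\frac{1+\Delta}{2})}$ and lower bounds it via an exact integral representation of the binomial TV together with a central binomial coefficient estimate (\cref{lem:TV-lower-bound-binomial,lem:binomial-coef}), while you work with the sufficient statistic $S=\sum_\ell(X^i_\ell-X^j_\ell)$ and swap symmetry, which gives $\TV=\bP_i(S>0)-\bP_i(S<0)$ exactly (note $S+k\sim\Bin(2k,\frac{1+\Delta}{2})$, so the two reductions are essentially equivalent). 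One caveat on your final step: a plain Berry--Esseen bound on the CDF of $S$ carries an additive error $O(1/\sqrt{k})$, which swamps the target $\sqrt{k}\,\Delta$ whenever $\Delta\ll 1/k$ (a regime permitted by $k\Delta^2\le 1$). You must instead use the pointwise (local CLT) estimate $\bP(S=m)\gtrsim 1/\sqrt{k}$ for $m$ within $O(\sqrt{k})$ of the mean and exploit the antisymmetry pairing $\bP(S=m)-\bP(S=-m)=\bP(S=m)\bigl(1-(\tfrac{1-\Delta}{1+\Delta})^{m}\bigr)\gtrsim \frac{1}{\sqrt k}\cdot\min(1,m\Delta)$, summing over $m\asymp\sqrt{k}$; this is precisely the mechanism behind the paper's lemma, so with that substitution your argument goes through.
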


Next, we consider the two-phase approach in \citep{feige1994computing}, whose failure is more delicate. Specialized to our problem, the idea in \citep{feige1994computing} is to consider a stronger model with two phases: 1) in the non-interactive phase, each arm is pulled $1/\Delta^2$ times; 2) in the interactive case, the learner can query $m=\lceil t\Delta^2 \rceil$ arms based on the outcome from the non-interactive phase, and an oracle gives \emph{clean answers} to the learner about the true mean rewards of the queried arms. Clearly this model can simulate our original model, so a converse on this stronger model taking the form $\optsp \lesssim \frac{m}{n}$ (i.e.~the second phase is still a ``random guess'') would establish the converse in the original model. However, the following result shows that the learner can perform strictly better under the new model. \looseness=-1

\begin{proposition}\label{prop:two-phase-LB}
For $t=o(\frac{n}{\Delta^2})$ and $\Delta = o(\frac{1}{\sqrt{\log n}}) $, there exists a learner under the two-phase model which achieves a success probability $\gtrsim \frac{t\Delta^2}{n}\exp\pth{\Omega\pth{\sqrt{\log \frac{n}{t\Delta^2}}}} = \omega\pth{\frac{t\Delta^2}{n}}$.
\end{proposition}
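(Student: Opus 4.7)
\textbf{Proof plan for \cref{prop:two-phase-LB}.}
The plan is to construct an explicit two-phase learner and analyze it directly. In the non-interactive phase, the learner pulls each of the $n$ arms exactly $k := 1/\Delta^2$ times and forms the empirical mean $\hat p_i$ of each $i \in [n]$; in the interactive phase, it queries the set $B$ consisting of the $m := \lceil t\Delta^2 \rceil$ arms with the largest $\hat p_i$, and returns the unique arm in $B$ that the oracle reveals to have true mean $(1+\Delta)/2$. Since this learner succeeds exactly when $a^\star \in B$, the proof reduces to lower bounding $\bP(N \le m-1)$, where $N := \#\{i \neq a^\star : \hat p_i > \hat p_{a^\star}\}$.

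The main argument combines a threshold-and-Markov reduction with a termwise likelihood-ratio comparison. Let $\hat p_1$ denote the empirical mean of a fixed suboptimal arm and choose $q^\star$ as the smallest $q$ with $\bP(\hat p_1 \ge q) \le m/(4n)$. Conditional on $\hat p_{a^\star} \ge q^\star$, each of the $n-1$ independent indicators $\indic(\hat p_i > \hat p_{a^\star})$ has mean at most $m/(4n)$, so $\bE[N \mid \hat p_{a^\star} \ge q^\star] \le m/4$ and Markov's inequality gives $\bP(N \le m-1 \mid \hat p_{a^\star} \ge q^\star) \ge 3/4$. It then suffices to show $\bP(\hat p_{a^\star} \ge q^\star) \gtrsim (m/n)\exp(\Omega(\sqrt{\log(n/m)}))$. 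The key observation is that the ratio of the two Binomial pmfs at index $j$ equals $((1+\Delta)/(1-\Delta))^{2j-k}$, which for $j \ge kq^\star$ (with $q^\star > 1/2$ in our regime) is at least $((1+\Delta)/(1-\Delta))^{k(2q^\star-1)}$. Summing termwise yields
\begin{align*}
  \bP(\hat p_{a^\star} \ge q^\star) \;\ge\; \pth{\frac{1+\Delta}{1-\Delta}}^{k(2q^\star-1)} \bP(\hat p_1 \ge q^\star) \;\gtrsim\; \pth{\frac{1+\Delta}{1-\Delta}}^{k(2q^\star-1)} \cdot \frac{m}{n},
\end{align*}
where the last inequality uses $\bP(\hat p_1 \ge q^\star) = \Theta(m/n)$, itself coming from the minimality of $q^\star$ together with the standard fact that in the Gaussian-type tail regime a single Binomial pmf value is $o(1)$ times the adjacent upper tail.

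The main obstacle will be estimating the quantile $q^\star$ sharply enough to expose the $\sqrt{\log(n/m)}$ amplification. The plan is to invoke the Cram\'er/Stirling asymptotic $\bP(\hat p_1 \ge q) \asymp \exp(-k D(q\,\|\,(1-\Delta)/2))/\sqrt k$ combined with the Taylor expansion $D(q\,\|\,(1-\Delta)/2) = 2(q - (1-\Delta)/2)^2(1+o(1))$ valid in the moderate-deviation regime, yielding $q^\star = (1-\Delta)/2 + \Delta\sqrt{\log(n/m)/2}\,(1+o(1))$ and hence $2\Delta k(2q^\star-1) = 2(\sqrt{2\log(n/m)} - 1)(1+o(1))$. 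Substituting back, the likelihood-ratio factor equals $\exp(2(\sqrt{2\log(n/m)}-1)(1+o(1))) = \exp(\Omega(\sqrt{\log(n/m)}))$, delivering the claim. The hypothesis $\Delta = o(1/\sqrt{\log n})$ is precisely what keeps $q^\star$ safely inside $(0,1)$ and bounds the next-order terms in $kD$ by $O(\Delta^2 \log^{3/2}(n/m)) = o(\sqrt{\log(n/m)})$, validating the moderate-deviation expansion; meanwhile, $t = o(n/\Delta^2)$ ensures $\log(n/m) \to \infty$ so the exponential gain is nontrivial, yielding the asserted $\omega(t\Delta^2/n)$ success probability.
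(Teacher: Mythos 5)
Your proposal is correct and follows essentially the same route as the paper: the same top-$m$-by-empirical-reward learner, the same moderate-deviation threshold at the $\Theta(m/n)$-quantile of a suboptimal arm's reward, a likelihood-ratio/change-of-measure step giving the $\exp(\Theta(\sqrt{\log(n/m)}))$ amplification for the best arm, and a first-moment bound (Markov in your case, Chernoff in the paper's) to control the number of suboptimal arms crossing the threshold. The only difference is technical packaging—you invoke a Cram\'er/Stirling expansion and termwise pmf comparison where the paper uses explicit nearly-tight binomial tail bounds and a KL-tilting identity—and both deliver the same estimate.
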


Finally we discuss the inductive proof of the converse result of \citep{rajaraman2024statistical}, which inspires our argument in \cref{lem:converse-accumulation}. Its failure is straightforward: each inductive step of \citep{rajaraman2024statistical} derives an upper bound of $\optsp$ from the current upper bound of $\optmi$, which in view of the following section is inherently loose for $t\Delta^2 \in \omega(1) \cap n^{o(1)}$.
\subsection{Optimal algorithm with suboptimal mutual information}\label{sec:discussion:separation}
In the introduction, by comparing \cref{thm:main} with \eqref{eq:fano} and \eqref{eq:fano_simpler}, we see that Fano's inequality does not give a tight relationship between $\optsp$ and $\optmi$ when $t\Delta^2 \in \omega(1) \cap n^{o(1)}$. One may wonder if some relationship between $p_t$ and $I_t$, other than Fano's inequality, turns out to be tight for any learning algorithm. The answer turns out to be negative, as shown by the following result where an optimal algorithm may achieve suboptimal mutual information.
\begin{proposition}\label{prop:subopt_mutual_info}
There exists a learner achieving the optimal success probability $\Theta(\optsp)$ for $t\le \frac{n}{\Delta^2}$, but a suboptimal mutual information $O\pth{\frac{t\Delta^2\log(t\Delta^2)}{n}}=o(\optmi)$ if $t\Delta^2 \in \omega(1) \cap n^{o(1)}$.
\end{proposition}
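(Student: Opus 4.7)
The plan is to design a two-threshold refinement of \cref{algo:main}: run the sequential probability ratio test on each arm with both the elimination threshold $\theta_- = -1/\Delta$ and an additional \emph{confirmation} threshold $\theta_+ = c\log(t\Delta^2)/\Delta$ for a suitable constant $c>0$ (say $c=1$), processing at most the first $m = \lfloor c_1 t\Delta^2 \rfloor$ arms after a uniformly random permutation, for a small constant $c_1>0$. An arm is dropped the instant its walk hits $\theta_-$, and the algorithm halts and outputs that arm the instant some arm's walk reaches $\theta_+$; any remaining rounds are filled by pulling arms uniformly at random from $[n]$, independent of the history (if no arm is ever confirmed, output a uniform random guess). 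In the easy regimes $t\Delta^2 = O(1)$ or $t\Delta^2 \ge n^{\Omega(1)}$, where the claimed separation is not needed, one can simply invoke \cref{algo:main} directly.

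For the success-probability lower bound, an extension of \cref{lem:biased-random-walk} to two absorbing barriers via the gambler's-ruin formula shows that the best arm's walk hits $\theta_+$ before $\theta_-$ with probability $1-e^{-2}-o(1)$, while each non-best arm is falsely confirmed with probability at most $(t\Delta^2)^{-2c+o(1)}$. A union bound over the first $m$ arms then makes the probability of any false confirmation at most $O(c_1 (t\Delta^2)^{1-2c}) = O(c_1)$. Combined with $\bP(\optarm \in \text{first } m) = m/n$ and the event that the algorithm reaches the best arm within the time budget (probability $\Omega(1)$ by Markov's inequality applied to $\sum_{i<K}T_i^{\text{elim}}$, for small enough $c_1$), the overall success probability is $\Omega(t\Delta^2/n)=\Omega(\optsp)$.

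For the mutual-information upper bound, we use the per-algorithm refinement of \cref{lem:converse-accumulation}: its proof in fact establishes $I(\optarm;\cH_t) \lesssim \Delta^2 \cdot \bE[N^\star]$ for \emph{any} algorithm, where $N^\star = \sum_{s=1}^t \indic(a_s = \optarm)$. For our algorithm, the contribution to $\bE[N^\star]$ from the SPRT phase is at most $\bP(\optarm \in \text{first } m)\cdot \bE[T^\star]$, where $T^\star$ is the two-barrier hitting time of the best arm's walk; standard optional stopping on $X_n - \Delta n$ gives $\bE[T^\star] \lesssim \theta_+/\Delta = \log(t\Delta^2)/\Delta^2$. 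The uniformly-random phase contributes at most $t/n$ to $\bE[N^\star]$, since an arm drawn uniformly from $[n]$ coincides with $\optarm$ with marginal probability exactly $1/n$, regardless of the history at the halt time. Assembling, $\bE[N^\star] \lesssim (m/n)\cdot \log(t\Delta^2)/\Delta^2 + t/n = O(t\log(t\Delta^2)/n)$, and hence $I(\optarm;\cH_t) \lesssim t\Delta^2 \log(t\Delta^2)/n$, which is $o(\optmi)$ in both subregimes of $t\Delta^2 \in \omega(1)\cap n^{o(1)}$ by direct comparison with \eqref{eqn:thm-main} (ratio $\log(t\Delta^2)/t\Delta^2$ in the middle regime, ratio $\log(t\Delta^2)/\log n$ in the upper regime).

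The main obstacle is the simultaneous calibration of the two barriers. We need $\theta_+$ large enough that false confirmations by a non-best arm remain $o(1/m)$---so that the success probability survives the union bound---yet small enough that the expected hitting time only inflates by a factor of $\log(t\Delta^2)$, not $\log n$; the latter factor would have reproduced $\optmi$ in the upper regime and defeated the purpose of the construction. Once $c$ is chosen in this window, the remainder is a direct adaptation of the random-walk machinery already developed in \cref{sec:achieve-success}, together with the care needed to ensure that the ``dummy'' uniform-random pulls in the post-confirmation phase are both symmetric enough to remain uninformative on average and benign enough not to break the optimality of the success probability.
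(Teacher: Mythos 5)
Your construction is essentially the paper's \cref{algo:main-modified} (a two-sided sequential probability ratio test with an early-stopping upper barrier, restricted to the first $\Theta(t\Delta^2)$ arms after a random permutation), and your success-probability analysis follows the same route as the paper's. Where you genuinely diverge is the mutual-information bound. The paper computes the posterior $P_{\optarm\mid\cH_t}$ in closed form—it depends only on the terminal walk values $c_i$, which the upper barrier caps at $O(\log(t\Delta^2)/\Delta)$—and bounds $\KL\pth{P_{\optarm\mid\cH_t}\,\|\,\Unif([n])}\lesssim \frac{t\Delta^2\log(t\Delta^2)}{n}$ pointwise over histories before averaging. You instead invoke the algorithm-specific form of \cref{lem:converse-accumulation}, $I(\optarm;\cH_t)\lesssim \Delta^2\,\bE[N^\star]$, and control $\bE[N^\star]$ via Wald's identity for the two-barrier exit time, $\bE[T^\star]\le(\theta_+ +1)/\Delta\lesssim \log(t\Delta^2)/\Delta^2$. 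Both yield the same $O\pth{\frac{t\Delta^2\log(t\Delta^2)}{n}}$; your route is more modular and isolates the mechanism (the upper barrier caps the expected number of best-arm pulls, and the uniform filler pulls contribute only $t/n$, which is dominated since $\log(t\Delta^2)=\omega(1)$), while the paper's posterior computation gives the stronger pointwise statement and avoids any martingale argument. One point to make explicit when writing this up: because your variant outputs an arm only upon confirmation (rather than returning the currently-pulled arm when the budget expires, as the paper's algorithm does), success additionally requires the best arm to reach $\theta_+$ within the remaining budget; this follows from Markov's inequality and $\bE[T^\star]=o(t)$, but it is a separate event from ``hits $\theta_+$ before $\theta_-$'' and should be union-bounded alongside it.
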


\Cref{prop:subopt_mutual_info} implies that a generic relationship $p_t \le f(I_t)$ that holds for any algorithm cannot be used to establish the tight converse for the success probability. The new algorithm is described in \cref{algo:main-modified} in the Appendix; the main difference from \cref{algo:main} is an upper threshold $\theta_r > 0$ for the random walks such that the best arm is pulled for fewer times (i.e. early stopping), ensuring the same success probability but providing less information.

\subsection{Fixed time budget vs stopping time}\label{sec:discussion:stopping}
Another interesting question is how our results change when the fixed time budget $t$ is relaxed to a stopping time with expectation at most $t$. Let $\optspexp$ and $\optmiexp$ be the corresponding quantities when the algorithm can stop at such a stopping time, clearly $\optspexp \ge \optsp$ and $\optmiexp\ge \optmi$. The following result gives a tight characterization for $\optspexp$ and an achievability result for $\optmiexp$.
\begin{proposition}\label{prop:stopping_time}
For $t\le \frac{n}{\Delta^2}$, it holds that
$\optspexp \asymp \frac{1+t\Delta^2}{n}$ and $\optmiexp \gtrsim \frac{t\Delta^2\log n}{n}$.
\end{proposition}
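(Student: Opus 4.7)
The plan is to establish the two parts of \cref{prop:stopping_time} separately: the tight characterization of $\optspexp$, and the achievability for $\optmiexp$.

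\emph{Characterization of $\optspexp$.} The achievability $\optspexp\ge \optsp \gtrsim (1+t\Delta^2)/n$ is immediate, since any fixed-budget-$t$ algorithm is a degenerate stopping-time algorithm, and the bound follows from \cref{thm:achieve-success}. For the converse, I plan to reduce to the fixed-budget converse in \cref{thm:converse-success-small-t} via a boosting argument. Let $\bA$ be any stopping-time algorithm with $\bE[\tau]\le t$ and success probability $p:=\optspexp$. I construct a fixed-budget algorithm $\bA'$ with budget $B_{\mathrm{tot}}\asymp m(t+1/\Delta^2)$, where $m=\lceil 2/p\rceil$: run $m$ independent copies of $\bA$, each preceded by an independent uniform random permutation of the arms; abort globally once the total number of rounds exceeds $2mt$; collect the returned arms into a candidate set $B$; and finally invoke \cref{algo:main} on $B$ with an additional $|B|/\Delta^2 \le m/\Delta^2$ rounds. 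Three facts then combine to give $p^\star_{B_{\mathrm{tot}}}\gtrsim 1$: Markov's inequality shows the total time stays within $2mt$ with probability $\ge 1/2$; independence across copies gives $\bP[\optarm\in B]\ge 1-(1-p)^m\ge 1-e^{-2}\ge 3/4$; and the permutation symmetry (see below) gives that conditional on $B=S\ni\optarm$, the true best arm is uniform over $S$, so \cref{thm:achieve-success} applied to action-set size $|B|$ and time budget $|B|/\Delta^2$ makes the final stage succeed with probability $\Omega(1)$. Plugging into \cref{thm:converse-success-small-t} forces $B_{\mathrm{tot}}\Delta^2\gtrsim n$, i.e., $m(t\Delta^2+1)\gtrsim n$; substituting $m\asymp 1/p$ yields $p\lesssim (1+t\Delta^2)/n$.

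\emph{Achievability of $\optmiexp$.} I use a mixture algorithm. Let $\bA^{\mathrm{high}}$ be a fixed-budget algorithm with budget $B_0=O(n/\Delta^2)$ and success probability at least $1/2$; such an algorithm exists by boosting \cref{algo:main} a constant number of times via \cref{prop:boosting}. Consider the algorithm that, with probability $\alpha:=c_0 t\Delta^2/n\in(0,1]$ for a sufficiently small constant $c_0>0$, runs $\bA^{\mathrm{high}}$, and otherwise stops immediately without pulling any arm. Its expected stopping time is $\alpha B_0\le t$. Let $Z\in\{0,1\}$ denote the branch indicator; since $Z$ is measurable with respect to $\cH$ (an empty history indicates $Z=0$) and independent of $\optarm$, the chain rule gives
\begin{align*}
I(\optarm;\cH) \;=\; I(\optarm;\cH,Z) \;=\; I(\optarm;Z) + I(\optarm;\cH\mid Z) \;=\; \alpha\, I(\optarm;\cH \mid Z=1).
\end{align*}
Fano's inequality in the reverse direction applied to $\bA^{\mathrm{high}}$ (with error probability $\le 1/2$) gives $I(\optarm;\cH\mid Z=1)\ge (1/2)\log n - 1\gtrsim \log n$, so $\optmiexp\gtrsim \alpha\log n\asymp t\Delta^2\log n/n$.

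\emph{Main obstacle.} The most delicate step is the symmetry argument underpinning the boosting reduction: one needs the conditional distribution of $\optarm$ given $B=S$ and $\optarm\in S$ to be uniform over $S$, since otherwise \cref{thm:achieve-success} cannot be applied to the final stage of $\bA'$. The independent uniform random permutations applied before each copy of $\bA$ make the joint law of $(\optarm,a_1,\ldots,a_m)$ invariant under any simultaneous relabeling of $[n]$; transposing $i,j\in S$ while fixing $[n]\setminus\{i,j\}$ then shows $\bP[B=S,\optarm=i]=\bP[B=S,\optarm=j]$, yielding the desired uniformity. Beyond this symmetry check (and the companion check that truncating the total time does not spoil the candidate-collection step), the converse for $\optspexp$ is a direct application of the fixed-budget machinery already developed in \cref{sec:converse}.
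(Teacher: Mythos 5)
Your proposal is correct and follows essentially the same route as the paper's proof: achievability via a randomized mixture that runs a long high-success algorithm with probability $\asymp t\Delta^2/n$ (so that $I(\optarm;\cH)\ge \alpha\cdot\Omega(\log n)$), and the converse via boosting plus a Markov truncation that reduces the expected-budget setting to the fixed-budget converse of \cref{thm:converse-success-small-t}. The only cosmetic differences are the order of truncation and boosting, and that your symmetry check for the final stage is unnecessary since \cref{algo:main} permutes its input internally and \cref{thm:achieve-success} only requires a unique best arm in the candidate set.
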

Comparing \cref{prop:stopping_time} with \cref{thm:main}, one observes that the elbows for the optimal mutual information evaporate upon allowing a random stopping time. Intuitively, by randomization a learner can achieve the upper convex envelope of $\optmi$, so that $\optmiexp$ exhibits a fast linear growth even at the very beginning.
We conjecture that the achievability result for $\optmiexp$ in \cref{prop:stopping_time} is tight; see \cref{sec:proof-discuss:open} for more discussions.

\clearpage 
\bibliography{refs}

\newpage

\ifdefined\isarxiv
\else
\input{checklist}
\fi

\newpage
\appendix

\section{Achievability for Mutual Information} \label{sec:achieve-info}
In this section we prove that \cref{algo:main} achieves the mutual information lower bounds in \cref{thm:main}. We restate the result here for convenience.
\begin{theorem}[Achievability for mutual information] \label{thm:achieve-info}
Under the setting of \cref{thm:main},
  for $t\le \frac{n}{\Delta^2}$, \cref{algo:main} achieves mutual information
  \begin{enumerate}[label=(\alph*)]
    \item \label{item:thm-achieve-info:i} $\Omega\pth{\frac{t \Delta^2} n}$ when $0\le t\le \frac 1{\Delta^2}$;
    \item \label{item:thm-achieve-info:ii} $\Omega\pth{\frac{t^2 \Delta^4}{n}}$ when $\frac 1{\Delta^2} < t \le \frac{\log n}{\Delta^2}$;
    \item \label{item:thm-achieve-info:iii} $\Omega\pth{\frac{t \Delta^2 \log n} n}$ when $\frac{\log n}{\Delta^2} < t \le \frac n{\Delta^2}$.
  \end{enumerate}
\end{theorem}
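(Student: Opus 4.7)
The plan is to bound $\optmi$ using two complementary techniques that are tight in different ranges of $t$: a chain-rule decomposition with a per-step entropy lower bound (for regimes (a) and (b)) and a direct posterior-concentration argument (for regime (c)). The starting observation is that, by the chain rule, $I(\optarm;\cH_t)=\sum_{s=1}^t\bE[I(\optarm;r_s\mid\cH_{s-1},a_s)]$, and that if $q_s:=\bP(\optarm=a_s\mid\cH_{s-1})$, then a short calculation (using $h_2((1+\Delta)/2)=h_2((1-\Delta)/2)$) gives the closed form $I(\optarm;r_s\mid\cH_{s-1},a_s)=h_2((1-\Delta)/2+q_s\Delta)-h_2((1-\Delta)/2)$; concavity of $h_2$ together with $h_2(\tfrac12)-h_2(\tfrac12-x)\ge 2x^2$ yields the clean bound
\[
I(\optarm;r_s\mid\cH_{s-1},a_s)\ \gtrsim\ \Delta^2\min(q_s,1-q_s).
\]
A one-step Bayes computation against the dummy measure $\bPbar$ in which every arm has reward $\mathrm{Ber}((1-\Delta)/2)$ also gives the explicit posterior $q_s=e^{\lambda X^{a_s}}/\sum_j e^{\lambda X^j}$ with $\lambda=\log\tfrac{1+\Delta}{1-\Delta}$.

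\textbf{Regimes (a) and (b), $t\le \log n/\Delta^2$.} Here $q_s\le 1/2$ with high probability: indeed, $q_s>1/2$ requires $X^{a_s}\ge\log n/\lambda$, which can happen for a suboptimal arm only with probability $\le 1/n^2$ (the process $e^{\lambda X^j}$ is a martingale under $\bPbar$, by Doob), and for $\optarm$ only if the algorithm has spent $\gtrsim\log n/\Delta^2$ rounds on it, an event of probability $\lesssim\log n/n$. Hence $\sum_s\bE[\min(q_s,1-q_s)]\gtrsim\sum_s\bE[q_s]=\bE[T_{\optarm}]$, using the identity $\bE[q_s]=\bP(a_s=\optarm)$. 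A small strengthening of the $\cE_1\cap\cE_2\cap\cE_3$ decomposition from the proof of \cref{thm:achieve-success} (with $\cE_2$ tightened to $\{\sum_{i\le m,i\ne i^\star}T_i\le t/2\}$, still constant-probability by Markov) shows that with probability $\gtrsim m/n$ the algorithm reaches $\optarm$ with $\ge t/2$ rounds remaining and stays there, giving $\bE[T_{\optarm}]\gtrsim mt/n$ and therefore $\optmi\gtrsim m^2/n$---that is, $\Omega(t\Delta^2/n)$ in (a) (with $m=1$) and $\Omega(t^2\Delta^4/n)$ in (b).

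\textbf{Regime (c), $t>\log n/\Delta^2$.} The per-step bound above loses a factor of $\log n$ once the posterior saturates near $1$ (because the ``true'' per-step information gain contains an additional $\lambda X$ factor not captured by $\min(q_s,1-q_s)$), so I argue via posterior concentration. Define the $\cH_t$-measurable event $\cE:=\{\max_j X^j_t\ge 2\log n/\lambda\}$. Since $e^{\lambda X^j}$ is a martingale under $\bPbar$, Doob's inequality gives $\bPbar(\sup_s e^{\lambda X^j_s}\ge n^2)\le 1/n^2$; the walk of every suboptimal arm is distributed exactly as under $\bPbar$, so a union bound shows that the maximizing arm on $\cE$ equals $\optarm$ with probability $\ge 1-1/n$. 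Plugging into the explicit posterior, $H(\optarm\mid\cH_t)=o(1)$ on $\cE$; the event $\cE$ is symmetric under arm relabeling, so by the algorithm's symmetry $H(\optarm\mid\cE)=\log n$, whence
\[
\optmi\ \ge\ \bP(\cE)\bigl(\log n-o(1)\bigr)\ \gtrsim\ \bP(\cE)\log n.
\]
Rerunning the above $\cE_1\cap\cE_2\cap\cE_3$ analysis with $m$ decreased by $C\log n$ (still $\Omega(m)$ since $m>\log n$), the algorithm reaches $\optarm$ with $\gtrsim\log n/\Delta^2$ rounds to spare, after which the upward-biased walk exceeds $2\log n/\lambda$ with constant probability, giving $\bP(\cE)\gtrsim m/n$ and therefore $\optmi\gtrsim (m/n)\log n=t\Delta^2\log n/n$.

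The main obstacle will be the regime (c) argument, which requires the martingale tail estimate together with a sharp Markov-style bound on the running time through the first $m-\Theta(\log n)$ arms so that $\optarm$ can be reached early enough. A secondary subtlety is the boundary $t\Delta^2\asymp\log n$ where the exceptional ``$q_s>1/2$'' probability matches the target bound in order; this can be absorbed by letting the regime (c) argument take over whenever it gives a larger lower bound.
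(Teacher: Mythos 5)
Your proposal is correct in its essential structure, but it takes a genuinely different route from the paper in all three regimes, so a comparison is worthwhile. For (a)--(b), the paper handles the two regimes separately: regime (a) via a per-step bound on a hand-crafted event where the first arm is pulled throughout, and regime (b) via data processing to a binary pair $(X,Y)=(\indic\{\optarm\in[m]\},\,\indic\{\text{walk of }\wh a\text{ is high}\})$ plus an explicit likelihood-ratio computation. You instead prove the single per-step inequality $I(\optarm;r_s\mid\cH_{s-1},a_s)\gtrsim\Delta^2\,\bE[\min(q_s,1-q_s)]$ and reduce both regimes to $\bE[T_{\optarm}]\gtrsim mt/n$; this is an exact achievability counterpart of the paper's converse \cref{lem:converse-accumulation} and shows that bound is tight for \cref{algo:main}, which is a nice structural insight the paper's proof does not make explicit. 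The one step you should write out carefully is the removal of the $q_s>1/2$ contribution: the clean way is to note $\sum_s\bE[q_s\indic(q_s>1/2)]=\bE[\#\{s:a_s=\optarm,\,q_s>1/2\}]$ (so only the optimal arm's walk matters), and that reaching $X^{\optarm}\gtrsim\log n/\Delta$ within $t\le\epsilon\log n/\Delta^2$ rounds has probability $n^{-\Omega(1/\epsilon)}$ by Hoeffding; the remaining window $\epsilon\log n<t\Delta^2\le\log n$, and likewise the bottom of regime (c) where your threshold $2\log n/\lambda$ is not reached with constant probability, are both covered by monotonicity of $I(\optarm;\cH_t)$ in $t$, which you should invoke explicitly rather than the vaguer "let regime (c) take over." For (c), your posterior-concentration argument ($\optmi\ge\bP(\cE)(\log n-o(1))$ with $\bP(\cE)\gtrsim m/n$) is sound provided you control the partition function $\sum_{j\ne\optarm}e^{\lambda X^j_t}$, e.g.\ via optional stopping giving $\bE[\sum_{j\ne\optarm}e^{\lambda X^j_t}]=n-1$ and Markov; it has the advantage of covering the entire range $\log n<t\Delta^2\le n$ uniformly, whereas the paper must switch to a separate Fano-based step once $t\Delta^2\ge\sqrt n$ because its argument needs $\log(n/m)=\Omega(\log n)$.
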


Because the optimal arm is chosen uniformly at random, the random permutation step in \cref{algo:main} does not affect the mutual information achieved by the algorithm, and we remove it in the proof.

\subsection{Proof of \ref{item:thm-achieve-info:i}}\label{subsec:mutualinfo-small-t}

  By chain rule of mutual information,
  \begin{align} \label{eqn:proof-thm-achieve-info:a:chain}
    I(\optarm; \cH_t) = \sum_{0\le s\le t-1} I\pth{\optarm; r_{s+1}^{a_{s+1}} | \cH_{s}, a_{s+1}}.
  \end{align}
  We prove that for $s\le \frac 1{\Delta^2}$, we have $I(\optarm; r_{s+1}^{a_{s+1}} | \cH_{s}, a_{s+1}) = \Omega\pth{\frac{\Delta^2}n}$, with a hidden constant independent of $s$. For a fixed $s\le \frac{1}{\Delta^2}$, define $\cE$ as the following event:
  \begin{enumerate}
    \item The first $s$ pulls are to the same arm (i.e., $a_1 = \cdots = a_s = 1$).
    \item $-\frac 1\Delta < X_s^1 \le s \Delta + \frac{2}{\Delta}$, where $X_s^1$ denotes the value of $X^1$ at time $s$.
  \end{enumerate}

  We prove that
  \begin{align}
    \label{eqn:proof-thm-achieve-info:a:i} \bP(\cE) &= \Omega(1), \\
    \label{eqn:proof-thm-achieve-info:a:ii} I\pth{\optarm; r_{s+1}^{a_{s+1}} | \cH_{s}, a_{s+1}, \cE} &= \Omega\pth{\frac{\Delta^2}n}.
  \end{align}
  Since the event $\cE$ can be determined by $\cH_s$, \eqref{eqn:proof-thm-achieve-info:a:i} and \eqref{eqn:proof-thm-achieve-info:a:ii} imply that $I\pth{\optarm; r_{s+1}^{a_{s+1}} | \cH_{s}, a_{s+1}} = \Omega\pth{\frac{\Delta^2}n}$, which implies the desired lower bound by \eqref{eqn:proof-thm-achieve-info:a:chain}.

  \paragraph{Proof of \eqref{eqn:proof-thm-achieve-info:a:i}.}
  Let $(Z_j)_{j\ge 0}$ be an upward random walk starting from $0$ with steps drawn from $D_+ = \frac{1+\Delta}2 \delta_1 + \frac{1-\Delta}2 \delta_{-1}$.
  Let $(W_j)_{j\ge 0}$ be a downward random walk starting from $0$ with steps drawn from $D_- = \frac{1-\Delta}2 \delta_1 + \frac{1+\Delta}2 \delta_{-1}$.
  Conditioned on $\optarm=1$ (resp.~$\optarm\ne 1$), the trajectory of the $X^1$ variable can be coupled with $(Z_j)_{j\ge 0}$ (resp.~$(W_j)_{j\ge 0}$) as long as it has not reached below $\theta = -\frac 1{\Delta}$.

  Let us first prove $\bP(\cE | \optarm=1) = \Omega(1)$.
  Let $\cE_+$ denote the event that $\min_{0\le u\le s} Z_u > -\frac 1{\Delta}$ and $Z_s \le s \Delta + \frac{2}{\Delta}$, then $\bP(\cE | \optarm=1) = \bP(\cE_+)$.
  By \cref{lem:biased-random-walk}\ref{item:lem-biased-random-walk:upward}, $\bP(\min_{0\le u\le s} Z_u > -\frac 1{\Delta}) \ge 1-e^{-2}$.
  By Hoeffding's inequality, $\bP\pth{Z_s \ge s \Delta + \frac{2}{\Delta}} \le \exp(-\frac{(2/\Delta)^2}{2s}) \le e^{-2}$.
  By union bound, $\bP(\cE_+) \ge 1-2e^{-2} = \Omega(1)$.

  Let us now transfer the above bound to $\optarm \ne 1$.
  Let $\cE_-$ denote the event that $\min_{0\le u\le s} W_u > -\frac 1{\Delta}$ and $W_s \le s \Delta + \frac{10}{\Delta}$. Then $\bP(\cE | \optarm\ne 1) = \bP(\cE_-)$.
  Let $(z_0,\ldots,z_s)$ be any trajectory of $(Z_j)_{j\ge 0}$ satisfying $\cE_+$.
  Then
  \begin{align*}
    \bP\pth{(Z_0,\ldots,Z_s)=(z_0,\ldots,z_s)} = \pth{\frac{1+\Delta}2}^{(s+z_s)/2} \pth{\frac{1-\Delta}2}^{(s-z_s)/2},
  \end{align*}
  and
  \begin{align*}
    \nonumber \bP\pth{(W_0,\ldots,W_s)=(z_0,\ldots,z_s)} &= \pth{\frac{1+\Delta}2}^{(s-z_s)/2} \pth{\frac{1-\Delta}2}^{(s+z_s)/2} \\
    &= \bP\pth{(Z_0,\ldots,Z_s)=(z_0,\ldots,z_s)} \pth{\frac{1-\Delta}{1+\Delta}}^{z_s}.
  \end{align*}
  Since for $\Delta = 1-\Omega(1)$,
  \begin{align*}
    \pth{\frac{1-\Delta}{1+\Delta}}^{z_s} = \exp(-O(\Delta z_s)) = \exp(-O(s \Delta^2)) = \Omega(1),
  \end{align*}
  a change of measure gives that
  \begin{align*}
    \bP(\cE_-) \ge \bP(\cE_+) \cdot \Omega(1) = \Omega(1).
  \end{align*}
  Consequently, $\bP(\cE) \ge \min\sth{ \bP(\cE_+), \bP(\cE_-) } = \Omega(1)$.

  \paragraph{Proof of \eqref{eqn:proof-thm-achieve-info:a:ii}.}
In the sequel we condition on $\cE$.
  Then the next pull the algorithm makes will be to arm $1$, i.e., $a_{s+1}=1$.
  For any $i\ne 1$ we have $\frac{\bP(\optarm=1 | \cH_s, a_{s+1}, \cE)}{\bP(\optarm=i | \cH_s, a_{s+1}, \cE)} = (\frac{1+\Delta}{1-\Delta})^{X_s^1} = \Theta(1)$, which implies that $\bP(\optarm=1 | \cH_s, a_{s+1}, \cE) = \Theta\pth{\frac 1n}$. Therefore,
  \begin{align*}
    \bP(r_{s+1}^{a_{s+1}}=1 | \cH_s, a_{s+1}, \cE, \optarm=1) &= \frac{1+\Delta}2, \\
    \bP(r_{s+1}^{a_{s+1}}=1 | \cH_s, a_{s+1}, \cE) &= \frac{1+\Delta}2 \bP(\optarm=1 | \cH_s, a_{s+1}, \cE) + \frac {1-\Delta}2 \bP(\optarm \ne 1 | \cH_s, a_{s+1}, \cE)\\
    &= \frac{1-\Delta}{2} + \Theta\pth{\frac{\Delta}{n}}.
  \end{align*}
  It is then clear that
  \begin{align*}
    \KL\pth{P_{r_{s+1}^{a_{s+1}} | \cH_s, a_{s+1}, \cE, \optarm=1} \| P_{r_{s+1}^{a_{s+1}} | \cH_s, a_{s+1}, \cE}} = \Omega(\Delta^2).
  \end{align*}
  Finally,
  \begin{align*}
    \nonumber I(\optarm; r_{s+1}^{a_{s+1}} | \cH_s, a_{s+1}, \cE)
    &\ge \bP(\optarm=1) \KL\pth{P_{r_{s+1}^{a_{s+1}} | \cH_s, a_{s+1}, \cE, \optarm=1} \| P_{r_{s+1}^{a_{s+1}} | \cH_s, a_{s+1}, \cE}} \\
    &= \frac{\Omega(\Delta^2)}{n} = \Omega\pth{\frac{\Delta^2}n}.
  \end{align*}

\subsection{Proof of \ref{item:thm-achieve-info:ii}}
  If $t \le \frac C{\Delta^2}$ for some absolute constant $C<\infty$ to be chosen later, then this follows from \ref{item:thm-achieve-info:i} by $t\ge t_0 = \floor{\frac 1{\Delta^2}}$ and the monotonicity of mutual information. In the following we assume that $t > \frac C{\Delta^2}$, and define a variable $X$ measurable with respect to $\optarm$ and a variable $Y$ measurable with respect to $\cH_t$. By the data processing inequality, we have $I(\optarm; \cH_t) \ge I(X; Y)$. So it suffices to prove the desired lower bounds for $I(X; Y)$.

  Let $X = \mathbbm{1}\{\optarm \in [m]\}$, with $m:=\lceil 0.1t\Delta^2 \rceil$. Let $Y$ be the indicator for the following event:
  the algorithm pulls at most $m$ arms, and the random walk satisfies $X^{\widehat{a}}\ge 0.1t\Delta$  for the action $\widehat{a}$ returned by the algorithm. We will prove the following estimates:
  \begin{align}
    \label{eqn:proof-thm-achieve-info:b:i} \bP(Y=1 | X=1) &= \Omega(1), \\
    \label{eqn:proof-thm-achieve-info:b:ii} \log \frac{\bP(Y=1 | X=1)}{\bP(Y=1| X=0)} &= \Omega(t\Delta^2).
  \end{align}

  \paragraph{Proof of \eqref{eqn:proof-thm-achieve-info:b:i}.}
  The proof is a modification of the proof of \cref{thm:achieve-success}.
  For $k\in [m-1]$, let $(W^k_j)_{j\ge 0}$ be $m-1$ independent downward random walks.
  Let $(Z_j)_{j\ge 0}$ be an upward random walk.
  Conditioned on $X=1$, we can couple the $m-1$ bad arm iterations with $W^1,\ldots,W^{m-1}$, and the best arm iteration with $Z$.

  Let $T_k$ be the first time that $W^k$ reaches $\theta=-1/\Delta$, and $\cE_1$ be the event that $\sum_{k\in [m-1]} T_k \le 0.2 t$.
  By \cref{lem:biased-random-walk}\ref{item:lem-biased-random-walk:downward} and the same Markov's inequality in the proof of \cref{thm:achieve-success}, we have $\bP(\cE_1) \ge 0.5$.

  Let $\cE_2$ be the event that $\min_{0\le j\le t} Z_j > -\frac 1{\Delta}$.
  By \cref{lem:biased-random-walk}\ref{item:lem-biased-random-walk:upward}, $\bP(\cE_2) \ge 1-\exp(-2)$.
  Let $\cE_3$ be the event that $Z_{0.8 t}\ge 0.3 t \Delta$.
  By Hoeffding's inequality, $\bP(\cE_3) \ge 1-\exp\pth{-(0.5t\Delta)^2 /(2t)} \ge 1-\exp\pth{-C/8}$, which is $\ge 0.9$ if we take $C=O(1)$ large enough.
  Let $\cE_4$ be the event that $\min_{0.8t \le j\le t} (Z_j-Z_{0.8t}) \ge -0.2 t \Delta$.
  By \cref{lem:biased-random-walk}\ref{item:lem-biased-random-walk:upward}, $\bP(\cE_4) \ge 1-\exp(-2 \cdot 0.2 t \Delta^2)\ge 1-\exp(-0.4C)$, which is $\ge 0.9$ if we take $C=O(1)$ large enough.
  Because $\cE_2,\cE_3,\cE_4$ are all monotone events in the steps of $(Z_j)_{j\ge 0}$, we have
  $\bP(\cE_2 \cap \cE_3 \cap \cE_4) \ge \bP(\cE_2)\bP(\cE_3)\bP(\cE_4) = \Omega(1)$.
  Note that $\cE_3\cap \cE_4$ implies that $\min_{0.8t\le j\le t} Z_j \ge 0.1 t \Delta$.

  Via the coupling between the algorithm and the random walks $(W^k_j)_{j\ge 0}$ and $(Z_j)_{j\ge 0}$, when $\cE_1,\ldots,\cE_4$ all happen, we have $Y=1$.
  Therefore $\bP(Y=1 | X=1) \ge \bP(\cE_1) \bP(\cE_2 \cap \cE_3 \cap \cE_4) = \Omega(1)$.

  \paragraph{Proof of \eqref{eqn:proof-thm-achieve-info:b:ii}.}
  Fix a history $\cH_t$ under which $Y=1$.
  Let $s$ denote the number of pulls to arm $\widehat{a}$, then
  \begin{align*}
    \frac{\bP(\cH_t | \optarm = \widehat{a})}{\bP(\cH_t | X=0)} = \frac{\pth{\frac{1+\Delta}2}^{(s+X^{\widehat{a}})/2}\pth{\frac{1-\Delta}2}^{(s-X^{\widehat{a}})/2}}{\pth{\frac{1-\Delta}2}^{(s+X^{\widehat{a}})/2}\pth{\frac{1+\Delta}2}^{(s-X^{\widehat{a}})/2}} = \pth{\frac{1+\Delta}{1-\Delta}}^{X^{\widehat{a}}} \ge \exp\pth{2 \Delta X^{\widehat{a}}} \ge \exp(0.2 t \Delta^2).
  \end{align*}
  Therefore,
  \begin{align*}
    \frac{\bP(\cH_t | X=1)}{\bP(\cH_t | X=0)} \ge  \frac{\frac 1m \bP(\cH_t | \optarm = \widehat{a})}{\bP(\cH_t | X=0)} \ge \frac{\exp(0.2 t \Delta^2)}{\ceil{0.1 t\Delta^2}} \ge \exp(0.1 t \Delta^2).
  \end{align*}

  \paragraph{Finishing the proof.}
  Let us prove $\KL(P_{Y|X=1} \| P_Y) = \Omega(t\Delta^2)$ using \eqref{eqn:proof-thm-achieve-info:b:i} and \eqref{eqn:proof-thm-achieve-info:b:ii}.

  First, we have
  \begin{align*}
    \nonumber \frac{\bP(Y=1)}{\bP(Y=1|X=1)} &= \frac mn + \pth{1-\frac mn} \frac{\bP(Y=1|X=0)}{\bP(Y=1|X=1)} \\
    \nonumber &= \frac mn + \pth{1-\frac mn} \exp(-\Omega(t\Delta^2)) \\
    & \le 2 \max\sth{\frac mn, \exp(-\Omega(t \Delta^2))}.
  \end{align*}
  In addition, it holds trivially that
  \begin{align*}
    \bP(Y=0|X=1) \log \frac{\bP(Y=0|X=1)}{\bP(Y=0)} \ge \min_{0\le x\le 1} x \log(x) \ge -1.
  \end{align*}
  Combining the above displays, we have
  \begin{align*}
    \nonumber \KL(P_{Y|X=1} \| P_Y) &= \bP(Y=1|X=1) \log \frac{\bP(Y=1|X=1)}{\bP(Y=1)} + \bP(Y=0|X=1) \log \frac{\bP(Y=0|X=1)}{\bP(Y=0)} \\
    \nonumber &\ge \Omega(1) \cdot \pth{\min\sth{\log \frac nm, \Omega(t \Delta^2)}-\log 2} - 1 \\
    &= \Omega(t \Delta^2),
  \end{align*}
  where the last step follows from $t\Delta^2\ge C$ for a large enough constant $C$, and that $\log\frac{n}{m}=\Omega(\log n)=\Omega(t\Delta^2)$ by our assumption of $t\Delta^2\le \log n$.
  Finally,
  \begin{align*}
    I(X;Y) \ge \bP(X=1)\KL(P_{Y|X=1} \| P_Y) = \frac mn \cdot \Omega(t \Delta^2) = \Omega(t^2 \Delta^4).
  \end{align*}

\subsection{Proof of \ref{item:thm-achieve-info:iii}}
In fact, the only place where we used the assumption $t\Delta^2\le \log n$ in the proof of \ref{item:thm-achieve-info:ii} is to ensure that $\log \frac{n}{m} = \Omega(t\Delta^2)$ for $m=\lceil 0.1t\Delta^2\rceil$. Consequently, for $\log n\le t\Delta^2 \le \sqrt{n}$, the argument in the proof of \ref{item:thm-achieve-info:ii} now gives
\begin{align*}
    \KL(P_{Y|X=1} \| P_Y) = \Omega(\log n),
\end{align*}
so that
\begin{align*}
    I(\optarm; \cH_t) \ge I(X;Y) \ge \bP(X=1)\KL(P_{Y|X=1} \| P_Y) = \frac mn \cdot \Omega(\log n) = \Omega\pth{\frac{t\Delta^2\log n}{n}},
\end{align*}
which is the desired result of \ref{item:thm-achieve-info:iii}. When $\sqrt{n}\le t\Delta^2\le n$, we recall the lower bound of the success probability $p_t$ and Fano's inequality in \eqref{eq:fano_simpler} to get
\begin{align*}
I(\optarm; \cH_t) \ge p_t \log \frac{np_t}{e} = \Omega\pth{ \frac{t\Delta^2}{n}\log \Omega\pth{\frac{t\Delta^2}{e}}} =\Omega\pth{\frac{t\Delta^2\log n}{n}},
\end{align*}
which is again the desired result of \ref{item:thm-achieve-info:iii}.

\section{Non-Interactive Case} \label{sec:non-interactive}
In this section we prove \eqref{eq:non-interactive} for non-interactive algorithms, restated as follows for $t\le \frac{n\log n}{\Delta^2}$:
\begin{align*}
\optspnonint \asymp \frac{t\Delta^2}{n\log(1+t\Delta^2)}, \qquad \optminonint \asymp \frac{t\Delta^2}{n}.
\end{align*}

\paragraph{Achievability for success probability.}

A non-interactive algorithm is as follows. Pick $m\in [n]$ as the largest integer solution to
\begin{align*}
\frac{4m\log m}{\Delta^2} \le t, \quad \text{ so that } m = \Omega\pth{\frac{t\Delta^2}{\log(1+t\Delta^2)}}.
\end{align*}
The learner randomly permutes the action set $[n]$, pulls each of the first $m$ arms $\frac{4\log m}{\Delta^2}$ times, and outputs the arm with the largest average reward. By the definition of $m$, this algorithm runs in at most $t$ rounds. With probability $\frac mn$, the best arm $\optarm$ is one of the first $m$ arms. Conditioned on that, the algorithm correctly outputs the best arm $\optarm$ with probability at least $1-(m-1)p$ by the union bound, where $p$ is the probability that a $\Bin\pth{\frac{4\log m}{\Delta^2}, \frac{1+\Delta}{2}}$ random variable is less than or equal to an independent $\Bin\pth{\frac{4\log m}{\Delta^2}, \frac{1-\Delta}{2}}$ random variable. By Hoeffding's inequality, we have
\begin{align*}
  p \le \exp\pth{ - \frac{\Delta^2}{2}\cdot \frac{4\log m}{\Delta^2} } = \frac{1}{m^2}
\end{align*}
Therefore, the overall success probability of this algorithm is at least
\begin{align*}
\frac{m}{n}\pth{1-\frac{m-1}{m^2}} \ge \frac{3m}{4n} = \Omega\pth{\frac{t\Delta^2}{n\log(1+t\Delta^2)}}.
\end{align*}

\paragraph{Achievability for mutual information.} The above algorithm also attains the optimal mutual information. For $t\Delta^2 \ge C$ with a large enough constant $C = O(1)$, note that  Fano's inequality \eqref{eq:fano_simpler} gives
\begin{align*}
I(\optarm; \cH_t) \ge p_{t,\mathrm{NI}}\log \frac{np_{t,\mathrm{NI}}}{e} = \Omega\pth{\frac{t\Delta^2}{n\log(1+t\Delta^2)}\log \Omega\pth{\frac{t\Delta^2}{\log(1+t\Delta^2)}}} = \Omega\pth{\frac{t\Delta^2}{n}}.
\end{align*}

For $t\Delta^2\le 1$, note that $m=1$ holds in the above algorithm, so it simply pulls a uniformly random arm (say arm $1$) for $t$ times. Then the scenario is precisely the same as \cref{subsec:mutualinfo-small-t}, where $a_1 = \dots = a_t = 1$ always holds, and one can regard the auxiliary random walk $X^1$ as being recorded during the execution of the algorithm. Therefore, \cref{subsec:mutualinfo-small-t} implies that $I(\optarm; r_{s+1}^{a_{s+1}} | \cH_{s}, a_{s+1}) = \Omega(\frac{\Delta^2}n)$ for any $s\le \frac 1{\Delta^2}$, with the hidden constant independent of $s$. By the chain rule, $I(\optarm; \cH_t) = \Omega(\frac{t\Delta^2}{n})$ for all $t\le \frac{1}{\Delta^2}$.

Finally, for the case $\frac{1}{\Delta^2}\le t\le \frac{C}{\Delta^2}$, we simply use the monotonicity of mutual information to obtain
\begin{align*}
I(\optarm; \cH_t) \ge I(\optarm; \cH_{1/\Delta^2}) = \Omega\pth{\frac{1}{n}} = \Omega\pth{\frac{t\Delta^2}{n}},
\end{align*}
as claimed.

\paragraph{Converse for mutual information.}
Suppose a non-interactive algorithm takes actions $a_1,\ldots,a_t$.
Let $r_1^{a_1},\ldots,r_t^{a_t}$ be the corresponding rewards.
Because $I(\optarm; \cH_t) = \bE_{a_1,\ldots,a_t} I(\optarm; \cH_t | a_1,\ldots,a_t)$, we can without loss of generality assume that $a_1,\ldots,a_t$ are deterministic.
Then $(a_1,r_1^{a_1}),\ldots,(a_t,r_t^{a_t})$ are independent conditioned on $\optarm$.
So
\begin{align*}
  I(\optarm; \cH_t) \le \sum_{i=1}^t I(\optarm; r_i^{a_i}) = t I(\optarm; r_1^{a_1}).
\end{align*}
By \cref{thm:converse-info-small-t}\ref{item:thm-converse-info-small-t:i}, we have $I(\optarm; r_1^{a_1}) = O\pth{\frac{\Delta^2}n}$.
We thus conclude that $\optminonint \lesssim \frac{t\Delta^2}n$.

\paragraph{Converse for success probability.}
By the converse for mutual information and Fano's inequality \eqref{eq:fano_simpler}, we have $$(n \optspnonint) \log \frac{n \optspnonint}{e} \le n \optminonint = O(t \Delta^2).$$
Solving this inequality gives $\optspnonint \lesssim \frac{t\Delta^2}{n\log(1+t\Delta^2)}$.

\section{Deferred Proofs in \cref{sec:discuss} and More Discussions}

\subsection{Proof of \cref{prop:two-point-LB}}
Consider the sampling strategy where each action is pulled for $t/n$ times (for $t<n$ we simply pull each of the first $t$ arms once). If $t<n$, then for any $\bP_i$ and $\bP_j$, by data-processing inequality, \begin{align*}\TV(\bP_i,\bP_j) &\geq \TV\pth{\Ber\pth{\frac{1-\Delta}{2}}, \Ber\pth{\frac{1+\Delta}{2}}}\cdot \mathbbm{1}(i \text{~or~}j\text{~is~pulled})\\
&= \Delta\cdot \mathbbm{1}(i \text{~or~}j\text{~is~pulled}).
\end{align*}
For any $i$ and $j$ that are pulled, we have by the triangle inequality that for any reference distribution $\bP_0$,
\begin{align*}
\TV(\bP_0,\bP_i) +  \TV(\bP_0,\bP_j) \geq \TV(\bP_i,\bP_j) \gtrsim \Delta.
\end{align*}
This shows that
\begin{align*}
    \frac{1}{n} + \inf_{\bP_0}\frac{1}{n}\sum_{i=1}^n \TV(\bP_0, \bP_i) \geq \frac{1}{n} + \Omega\prn*{ \frac{t\Delta}{n}}.
\end{align*}

If $t\geq n$, without loss of generality we assume that $t/n$ is an integer. For any $\bP_i$ and $\bP_j$, by data-processing inequality, $$\TV(\bP_i,\bP_j) \geq \TV(\Bin(\frac{t}{n},\frac{1-\Delta}{2}), \Bin(\frac{t}{n},\frac{1+\Delta}{2}))\gtrsim \sqrt{\frac{t\Delta^2}{n}}.$$
Here the last inequality uses the TV lower bound for two binomial distributions in \cref{lem:TV-lower-bound-binomial}, whose proof is deferred to the end of this section. By the triangle inequality, we have that for any reference distribution $\bP_0$,
\begin{align*}
\TV(\bP_0,\bP_i) +  \TV(\bP_0,\bP_j) \geq \TV(\bP_i,\bP_j) \gtrsim \sqrt{\frac{t\Delta^2}{n}}.
\end{align*}
Finally, this shows that for any choice of reference model
\begin{align*}
    \frac{1}{n} + \inf_{\bP_0}\frac{1}{n}\sum_{i=1}^n \TV(\bP_0, \bP_i) \geq \frac{1}{n} + \Omega\prn*{ \sqrt{\frac{t\Delta^2}{n}}}.
\end{align*}
Combining the above two cases completes the proof of \cref{prop:two-point-LB}.

To complete this section, we include some useful results, and the proof of \eqref{eq:two-point} for completeness.

\begin{lemma}
\label{lem:binomial-coef}
For any $k\geq 1$ and $\frac{k - \sqrt{k}}{2} \leq \ell \leq \frac{k + \sqrt{k}}{2}$, we have $\binom{k}{\ell} \geq \frac{2^k}{4 \sqrt{k}}$.
\end{lemma}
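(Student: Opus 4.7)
The plan is to reduce the bound to two standard ingredients: a lower bound on the central binomial coefficient, and a bound on how fast the binomial coefficients decrease as one moves away from the center.

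First, I would exploit the unimodality (and symmetry) of $\binom{k}{\ell}$ in $\ell$: the sequence is maximized at $\ell = m := \lfloor k/2 \rfloor$ and decreases monotonically on each side. So if the interval $\bigl[\frac{k-\sqrt k}{2},\frac{k+\sqrt k}{2}\bigr]$ contains $m$ (which it does, since $|m - k/2| \le 1/2 \le \sqrt k/2$ for $k\ge 1$), the minimum of $\binom{k}{\ell}$ over integer $\ell$ in that interval is attained at one of the two endpoints, and by symmetry it suffices to consider $\ell = m + d$ with $d = \lfloor \sqrt k/2 \rfloor$ (so that $d \le \sqrt k/2$). This reduces the lemma to proving $\binom{k}{m+d} \ge \frac{2^k}{4\sqrt k}$.

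Next, I would factor this as $\binom{k}{m+d} = \binom{k}{m} \cdot \prod_{i=1}^{d} \frac{k-m-i+1}{m+i}$. For the central coefficient, Stirling's approximation (or the standard textbook bound $\binom{2j}{j} \ge 4^j / (2\sqrt j)$, extended from even to odd $k$ by the identity $\binom{2j+1}{j} = \binom{2j+1}{j+1} \ge \binom{2j}{j}$) yields $\binom{k}{m} \ge \frac{c_1 \cdot 2^k}{\sqrt k}$ for an explicit $c_1 > 0$. For the ratio, I would write each factor as $\frac{k-m-i+1}{m+i} = 1 - \frac{2i-O(1)}{k}$, take logarithms, and bound the sum via $\log(1-x) \ge -x - x^2$ (valid for $x \in [0,1/2]$, which holds because $d\le \sqrt k/2$ forces $(2i)/k \le 1/\sqrt k \le 1$ for $k\ge 1$ and $\le 1/2$ for $k$ moderate). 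The resulting sum $\sum_{i=1}^d O(i/k)$ is at most $O(d^2/k) = O(1)$, so the product is bounded below by some absolute constant $c_2 > 0$.

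Combining, $\binom{k}{m+d} \ge c_1 c_2 \cdot \frac{2^k}{\sqrt k}$. The remaining task is to verify that the product of constants is at least $1/4$. Here the main obstacle (and the only slightly delicate step) is bookkeeping: one has to track the Stirling constant carefully and, separately, handle a small number of low values of $k$ by direct inspection (e.g., $k=1,2,3$), since the asymptotic constants only kick in for $k$ above some threshold. A clean way around this is to prove the cleaner statement $\binom{k}{\ell} \ge c \cdot 2^k/\sqrt k$ first for all large $k$ with a generous $c$, then check the finitely many small $k$ by hand to confirm the constant $1/4$ in all cases.
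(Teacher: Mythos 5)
Your proposal is correct and follows essentially the same route as the paper: reduce by unimodality/symmetry to the endpoint $\ell\approx k/2+\sqrt{k}/2$, write $\binom{k}{\ell}$ as the central coefficient times a product of ratios each of the form $1-O(i/k)$, bound that product below by a constant, and invoke the standard bound $\binom{k}{k/2}\ge 2^k/(2\sqrt{k})$. The only (minor) difference is that the paper bounds the product via $\prod_i(1-a_i)\ge 1-\sum_i a_i$ to get the clean constant $\tfrac12$ directly, whereas you take logarithms and defer the constant-tracking to a finite check of small $k$.
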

\begin{proof}
We have
\begin{align*}
\frac{\binom{k}{(k+\sqrt{k})/2}}{\binom{k}{k/2}} &= \prod_{i=1}^{\sqrt{k}/2} \frac{k/2 - i}{k/2 + \sqrt{k}/2 - i} \\
&=\prod_{i=1}^{\sqrt{k}/2} \left(1 - \frac{\sqrt{k}/2}{k/2 + \sqrt{k}/2 - i} \right) \\
&\geq 1 - \sum_{i=1}^{\sqrt{k}/2} \frac{\sqrt{k}/2}{k/2 + \sqrt{k}/2 - i} \geq \frac{1}{2},
\end{align*}
where we have used the simple inequality $\prod_{i=1}^n (1-a_i)\ge 1-\sum_{i=1}^n a_i$ for $a_1,\dots,a_n\in (0,1)$. Thus, we obtain
\[
\binom{k}{(k+\sqrt{k})/2} \geq \frac{1}{2} \binom{k}{k/2} \geq \frac{2^k}{4 \sqrt{k}},
\]
where the last step follows from Stirling's approximation.
\end{proof}

\begin{lemma}
\label{lem:TV-lower-bound-binomial}
For any $0<\Delta=1-\Omega(1)$, for any integer $k\geq 1$ such that $k\Delta^2\leq 1$, we have $\TV\prn*{\Bin(k,\frac{1-\Delta}{2}),\Bin(k,\frac{1+\Delta}{2}) }\gtrsim \sqrt{k\Delta^2}$, where $\Bin(k,p)$ denotes the binomial distribution with parameter $k$ and $p$.
\end{lemma}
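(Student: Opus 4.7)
\textbf{Proof proposal for \cref{lem:TV-lower-bound-binomial}.}
The plan is to directly lower bound the $\ell_1$ distance between the two binomial PMFs by contributions near the center $k/2$. Write $P = \Bin(k, (1-\Delta)/2)$ and $Q = \Bin(k,(1+\Delta)/2)$. For $\ell \le k/2$, set $j = k/2 - \ell \ge 0$ (we first assume $k$ even; the odd case is analogous with a trivial shift). A direct factorization gives
\begin{align*}
P(\ell) - Q(\ell) &= \binom{k}{\ell} 2^{-k}\bigl[(1-\Delta)^{\ell}(1+\Delta)^{k-\ell} - (1+\Delta)^{\ell}(1-\Delta)^{k-\ell}\bigr] \\
&= \binom{k}{\ell} 2^{-k} (1-\Delta^2)^{\ell}\bigl[(1+\Delta)^{2j} - (1-\Delta)^{2j}\bigr].
\end{align*}
So the strategy reduces to lower bounding each of the three factors on the right.

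For the binomial coefficient, restrict to $1 \le j \le \lfloor \sqrt{k}/2\rfloor$, so that $\ell \in [(k-\sqrt{k})/2,\, k/2]$; by \cref{lem:binomial-coef} this yields $\binom{k}{\ell} 2^{-k} \ge 1/(4\sqrt{k})$. For the middle factor, note $(1-\Delta^2)^{\ell} \ge (1-\Delta^2)^{k}$, and since $\Delta = 1 - \Omega(1)$ we have $-\log(1-\Delta^2) \lesssim \Delta^2$, so together with $k\Delta^2 \le 1$ we obtain $(1-\Delta^2)^{\ell} = \Omega(1)$. For the bracketed term, I would use the simple observation that
\begin{align*}
(1+\Delta)^{2j} - (1-\Delta)^{2j} = 2\sum_{i~\text{odd}}\binom{2j}{i}\Delta^{i} \ge 2\binom{2j}{1}\Delta = 4j\Delta,
\end{align*}
which is an exact lower bound requiring no smallness of $j\Delta$.

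Combining the three bounds, for every $1 \le j \le \lfloor \sqrt{k}/2\rfloor$,
\begin{align*}
P(k/2 - j) - Q(k/2 - j) \gtrsim \frac{j\Delta}{\sqrt{k}}.
\end{align*}
Summing gives $\sum_{j=1}^{\lfloor \sqrt{k}/2\rfloor} j\Delta/\sqrt{k} \gtrsim \sqrt{k}\,\Delta = \sqrt{k\Delta^2}$, so $\|P - Q\|_1 \ge \sum_{\ell < k/2}(P(\ell) - Q(\ell)) \gtrsim \sqrt{k\Delta^2}$, and $\TV(P,Q) = \tfrac12\|P-Q\|_1 \gtrsim \sqrt{k\Delta^2}$ as desired.

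\textbf{Expected difficulty.} The argument is essentially a direct anti-concentration calculation and I do not expect a serious obstacle. The only delicate spots are (i) keeping the factor $(1-\Delta^2)^{\ell}$ bounded away from $0$, which is handled by the hypotheses $k\Delta^2 \le 1$ and $\Delta = 1-\Omega(1)$, and (ii) the minor parity bookkeeping when $k$ is odd, for which one instead uses $\ell = \lfloor k/2 \rfloor - j$ and the same estimates with negligible changes. One should also sanity-check the boundary regime $k = 1$: there the bound reduces to $\TV \gtrsim \Delta$, which matches the exact equality $\TV(\Ber(\tfrac{1-\Delta}{2}),\Ber(\tfrac{1+\Delta}{2})) = \Delta$.
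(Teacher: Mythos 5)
Your argument is correct and takes a genuinely different (and more self-contained) route than the paper. The paper invokes an exact integral representation of the total variation between two binomials with the same $k$ (Proposition 6 of Kelbert's survey), namely $\TV = k\int_{(1-\Delta)/2}^{(1+\Delta)/2}\bP(S_{k-1}(u)=\ell-1)\,\d u$, and then lower bounds the integrand by $\Omega(1/\sqrt{k})$ via \cref{lem:binomial-coef}, yielding $k\cdot\Delta\cdot\Omega(1/\sqrt{k})$. You instead factor the pointwise PMF difference as $\binom{k}{\ell}2^{-k}(1-\Delta^2)^{\ell}\bigl[(1+\Delta)^{2j}-(1-\Delta)^{2j}\bigr]$ and sum over the $\Theta(\sqrt{k})$ central indices, using the same \cref{lem:binomial-coef} for the binomial coefficient, $k\Delta^2\le 1$ and $\Delta=1-\Omega(1)$ for the middle factor, and the binomial expansion for the bracket. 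The two computations are morally the same anti-concentration estimate, but yours replaces the external citation with an elementary identity, which is arguably preferable; the paper's version buys a one-line reduction at the cost of importing a less standard formula.

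One small point to tidy up: for $k\le 3$ the index range $1\le j\le\lfloor\sqrt{k}/2\rfloor$ is empty, so the summation argument is vacuous there; you flag $k=1$ but the same issue arises for $k=2,3$. All such cases are dispatched by a single-point bound, e.g.\ $P(0)-Q(0)=2^{-k}\bigl[(1+\Delta)^{k}-(1-\Delta)^{k}\bigr]\ge 2^{1-k}k\Delta\gtrsim\sqrt{k\Delta^2}$ for bounded $k$, so this is bookkeeping rather than a gap. The odd-$k$ case works exactly as you say (the bracket becomes $(1+\Delta)^{2j+1}-(1-\Delta)^{2j+1}\ge 2(2j+1)\Delta$, which is only better).
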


\begin{proof}
By \citet[Proposition 6]{kelbert2023survey}, we have
\begin{align*}
    \TV\prn*{\Bin(k,\frac{1-\Delta}{2}),\Bin(k,\frac{1+\Delta}{2}) } = k \int_{(1-\Delta)/2}^{(1+\Delta)/2} \bP(S_{k-1}(u)=\ell-1)\d u,
\end{align*}
where $S_{k-1}(u)\sim \Bin(k-1,u)$ and $\ell$ is in the interval $[k(1-\Delta)/2, k(1+\Delta)/2]$. By \cref{lem:binomial-coef}, we have for any $u\in [(1-\Delta)/2,(1+\Delta)/2]$ and $\ell \in [k(1-\Delta)/2, k(1+\Delta)/2]$,
\begin{align*}
    \bP(S_{k-1}(u)=\ell-1) \geq \frac{1}{4\sqrt{k}} (1-\Delta)^{k-1+k\Delta/2} (1+\Delta)^{k-1-k\Delta/2} \gtrsim \frac{1}{\sqrt{k}}.
\end{align*}
In turn, we have shown
\begin{align*}
    \TV\prn*{\Bin(k,\frac{1-\Delta}{2}),\Bin(k,\frac{1+\Delta}{2}) } &= k \int_{(1-\Delta)/2}^{(1+\Delta)/2} \bP(S_{k-1}(u)=\ell-1)\d u \\
    &\gtrsim k\cdot  \Delta \cdot \frac{1}{\sqrt{k}} = \sqrt{k\Delta^2}.
\end{align*}
This concludes our proof.
\end{proof}

\paragraph{Proof of \eqref{eq:two-point}.}
This is essentially \citep[Lemma 3]{gao2019batched} applied to the star graph with center $\bP_0$. For any fixed distribution $\bP_0$, we have
\begin{align*}
\bP(a_{t+1} = \optarm) =  \frac{1}{n} \sum_{i=1}^n \bP_i(a_{t+1}=a_i) &\leq \frac{1}{n} \sum_{i=1}^n \prn*{\bP_0(a_{t+1}=a_i) + \TV(\bP_0, \bP_i)} \\
&= \frac{1}{n} + \frac{1}{n}\sum_{i=1}^n \TV(\bP_0, \bP_i).
\end{align*}
Then, by taking infimum over the distribution $\bP_0$, we obtain the desired result.

\subsection{Proof of \cref{prop:two-phase-LB}}
\newcommand{\ustar}{u^\star}
Consider the following learner under the two-phase model: The learner computes the total reward for each arm in the first phase, and then queries the $m=\lceil t\Delta^2 \rceil$ arms with the highest total reward in the interactive phase. W.l.o.g., assume $\optarm=n$ and that the total rewards for each arm are $R_1, \dots, R_n$. Clearly the learner succeeds if $R_n$ is among the largest $m$ numbers in $R_1,\dots,R_n$.

Define the threshold $\ustar>0$ as the solution to
\begin{align}
\label{eq:def-ustar}
    \frac{2}{1+\ustar} \exp\prn*{ -\frac{1}{\Delta^2} \KL\Big(\Ber(\frac{1-\ustar\Delta}{2})\Big\|\Ber(  \frac{1+\Delta}{2}) \Big)  } = \frac{m}{2n}.
\end{align}
By Pinsker's inequality, we see from \eqref{eq:def-ustar} that $\ustar = O(\sqrt{\log\frac{n}{m}})$. As $\Delta = o(\frac{1}{\sqrt{\log n}})$, we have $u^\star \Delta = o(1)$ and therefore
\begin{align*}
\KL\Big(\Ber(\frac{1-\ustar\Delta}{2})\Big\|\Ber(  \frac{1+\Delta}{2}) \Big) \asymp (u^\star+1)^2 \Delta^2,
\end{align*}
from which we readily conclude from \eqref{eq:def-ustar} that $u^\star = \Theta(\sqrt{\log \frac{n}{m}})$. In addition, since $m=o(n)$ and $\Delta = o(\frac{1}{\sqrt{\log n}})$, for large enough $n$ we have $2\leq \ustar\leq \frac{1}{2\Delta}$.

Next we invoke accurate tail estimates for the binomial distribution in \cref{lem:binomial-tail-bound}. Since $R_i\sim \Bin(\frac{1}{\Delta^2},\frac{1-\Delta}{2})$ for any $i\in [n-1]$, the upper bound in \cref{lem:binomial-tail-bound} gives
\begin{align*}
    \bP(R_i \geq \frac{1+\ustar\Delta}{2\Delta^2}) \leq \frac{m}{2n}, \qquad i\in [n-1].
\end{align*}
Since $R_1,\cdots,R_{n-1}$ are independent, the Chernoff bound gives
\begin{align*}
    &\bP\Big( \underbrace{\text{There are more than $m-1$ suboptimal arms with total rewards larger than }\frac{1+\ustar\Delta}{2\Delta^2}}_{=:\cE}  \Big)\\
    &=\bP\prn*{ \frac{1}{n-1} \sum_{i=1}^{n-1} \mathbbm{1}\prn*{R_i \geq  \frac{1+\ustar\Delta}{2\Delta^2}  }  \geq \frac{m}{n-1} }  \leq \pth{\frac{e}{4}}^{m/2} = 1 - \Omega(1).
\end{align*}
This implies that $\bP(\cE^c) = \Omega(1)$. Moreover, by the lower bound in \cref{lem:binomial-tail-bound}, we have
\begin{align*}
    \bP\prn*{ R_n \geq \frac{1+\ustar\Delta}{2\Delta^2} } \gtrsim \frac{1}{1+\ustar}\exp\prn*{ -\frac{1}{\Delta^2} \KL\Big(\Ber(\frac{1-\ustar\Delta}{2})\Big\|\Ber(  \frac{1-\Delta}{2}) \Big)  }.
\end{align*}
By simple algebra,
\begin{align*}
    \KL\Big(\Ber(\frac{1-\ustar\Delta}{2})\Big\|\Ber(  \frac{1-\Delta}{2}) \Big)&= \KL\Big(\Ber(\frac{1-\ustar\Delta}{2})\Big\|\Ber(  \frac{1+\Delta}{2}) \Big) - \ustar\Delta \log \frac{1+\Delta}{1-\Delta} \\
    &\leq \KL\Big(\Ber(\frac{1-\ustar\Delta}{2})\Big\|\Ber(  \frac{1+\Delta}{2}) \Big) -  \ustar\Delta^2.
\end{align*}
This, in turn, gives us
\begin{align*}
    \bP\prn*{ R_n \geq \frac{1+\ustar\Delta}{2\Delta^2} }
    &\gtrsim \frac{1}{1+\ustar}\exp\prn*{ -\frac{1}{\Delta^2} \KL\Big(\Ber(\frac{1-\ustar\Delta}{2})\Big\|\Ber(  \frac{1+\Delta}{2}) \Big)  }\cdot e^{\ustar}\\
    &\overset{\eqref{eq:def-ustar}}{=} \frac{m}{4n} e^{\ustar}.
\end{align*}
Altogether, by independence of $\cE^c$ and $R_n$, we have shown that
\begin{align*}
    \optsp \gtrsim \bP\prn*{ \cE^c \text{ and } R_n \geq \frac{1+\ustar\Delta}{2\Delta^2} }\gtrsim \frac{t\Delta^2}{n}\exp\pth{\Omega\pth{\sqrt{\log \frac{n}{t\Delta^2}}}} = \omega\pth{\frac{t\Delta^2}{n}},
\end{align*}
where we recall that $u^\star = \Theta(\sqrt{\log \frac{n}{m}})$. This concludes our proof.

\begin{lemma}
\label{lem:binomial-tail-bound}
Let $\Delta\in (0,\frac{1}{4}]$, and $u\in [2,\frac{1}{2\Delta}]$. For $X\sim \Bin(\frac{1}{\Delta^2},\frac{1-\Delta}{2})$ and $Y\sim \Bin(\frac{1}{\Delta^2},\frac{1+\Delta}{2})$, it holds that
\begin{align*}
\begin{cases}
\bP(X\geq \frac{1+u\Delta}{2\Delta^2}) \leq \frac{2}{1+u} \exp\prn*{ -\frac{1}{\Delta^2} \KL(\Ber(\frac{1-u\Delta}{2})\|\Ber(  \frac{1+\Delta}{2}) )  }  \\
\bP(Y\geq \frac{1+u\Delta}{2\Delta^2}) \gtrsim \frac{1}{1+u}\exp\prn*{ -\frac{1}{\Delta^2} \KL(\Ber(\frac{1-u\Delta}{2})\|\Ber(  \frac{1-\Delta}{2}) )  }
\end{cases}.
\end{align*}
\end{lemma}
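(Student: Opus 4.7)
My plan is to handle both parts by the standard strategy: estimate the single-term probability $\bP(\cdot = k)$ via Stirling's formula, then control the entire tail $\bP(\cdot \ge k)$ through ratios of consecutive binomial probabilities. Throughout I abbreviate $n = 1/\Delta^2$, $k = \lceil (1+u\Delta)n/2\rceil$, $p = (1-\Delta)/2$, $q = (1+\Delta)/2$, and note that $u\Delta\le 1/2$ forces $k/n\in[1/4,3/4]$, so $k(n-k)/n\asymp n$. Under these conditions Stirling gives $\bP(X=k)\asymp \Delta\exp(-n\KL(k/n\,\|\,p))$ and $\bP(Y=k)\asymp \Delta\exp(-n\KL(k/n\,\|\,q))$; the two KL's match those in the statement via the symmetry $\KL(\Ber(r)\|\Ber(s))=\KL(\Ber(1-r)\|\Ber(1-s))$.

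For part (a), I would start from the monotonicity of the ratio $\bP(X=j+1)/\bP(X=j)=\frac{(n-j)p}{(j+1)(1-p)}$ in $j$. At $j=k$ this ratio equals, up to negligible boundary corrections, $\rho_X\ldef \frac{(1-u\Delta)(1-\Delta)}{(1+u\Delta)(1+\Delta)}$, and a direct algebraic computation gives $1-\rho_X=\frac{2(u+1)\Delta}{(1+u\Delta)(1+\Delta)}\asymp (u+1)\Delta$. Since the ratios are decreasing in $j$, the tail telescopes as $\bP(X\ge k)\le \bP(X=k)\sum_{s\ge 0}\rho_X^s = \bP(X=k)/(1-\rho_X)$. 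Combining this with the Stirling point estimate yields $\bP(X\ge k)\lesssim \frac{1}{(u+1)\Delta}\cdot\Delta\exp(-n\KL(k/n\,\|\,p))\asymp \frac{1}{1+u}\exp(-n\KL)$, which matches the stated bound after absorbing universal constants.

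For part (b), the main step is to sum enough terms in the tail of $Y$. I would choose $S=\lfloor 1/(C(u-1)\Delta)\rfloor$ for a suitable absolute constant $C$; this is at least $1$ because $u\ge 2$ and $(u-1)\Delta\le u\Delta\le 1/2$. The plan is to show that for every $s\in[0,S-1]$, $\bP(Y=k+s)\gtrsim \bP(Y=k)$. To see this, apply Stirling at $j=k+s$ and expand
\[
\KL\pth{(k+s)/n\,\big\|\,q}=\KL\pth{k/n\,\big\|\,q}+(s/n)\log\tfrac{(k/n)(1-q)}{(1-k/n)q}+O(s^2/n).
\]
The first-order coefficient equals $-\log\tilde\rho_Y$ with $\tilde\rho_Y\ldef\frac{(1-u\Delta)(1+\Delta)}{(1+u\Delta)(1-\Delta)}$, which a short calculation yields as $2(u-1)\Delta+O(\Delta)$. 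Thus over the window $s\le S$ the extra exponential factor $e^{-(2(u-1)\Delta)\cdot s}$ is $\exp(-O(1))$, and the $s^2/n$ correction is also $O(1)$ because $S^2/n\lesssim 1/((u-1)^2\Delta^2\cdot n)\le 1$ when $u\ge 2$. Summing over the window yields $\bP(Y\ge k)\gtrsim S\cdot \bP(Y=k)\asymp \frac{1}{(u-1)\Delta}\cdot\Delta\exp(-n\KL)=\frac{1}{u-1}\exp(-n\KL)\asymp \frac{1}{u+1}\exp(-n\KL)$, where the last step uses $u\ge 2$.

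The main obstacle will be keeping the lower bound clean: in order for each summand $\bP(Y=k+s)$ to be comparable to $\bP(Y=k)$, both the first-order geometric decay $e^{-2(u-1)\Delta\cdot s}$ and the second-order $s^2/n$ correction must stay $O(1)$ over the chosen window, while simultaneously $S\ge 1$ even when $(u-1)\Delta$ is close to $1/2$. The hypotheses $u\ge 2$ and $u\Delta\le 1/2$ are precisely what make these constraints compatible, and the constant $C$ must be picked large enough to absorb multiplicative slack from the Stirling point estimate, the Taylor expansion of $\KL$, and the floor in the definition of $S$.
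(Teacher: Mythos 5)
Your proposal is correct in substance, but it takes a genuinely different route from the paper. The paper proves this lemma by invoking sharp nonasymptotic binomial tail bounds of Zhu et al.\ (2022), which express the tail as an explicit prefactor $L(k,x,p)$ times $\exp(-n\,\KL)$, and then devotes the proof to upper- and lower-bounding $L$ algebraically to extract the $\frac{1}{1+u}$ factor. You instead give a self-contained argument: Stirling's formula for the point mass $\bP(\cdot = k) \asymp \Delta\exp(-n\,\KL(k/n\,\|\,\cdot))$ (valid since $u\Delta\le 1/2$ keeps $k/n$ bounded away from $0$ and $1$), a decreasing-ratio/geometric-series bound for the upper tail of $X$ with $1-\rho_X\asymp(u+1)\Delta$, and a window-summation argument for the lower tail of $Y$ over $S\asymp \frac{1}{(u-1)\Delta}$ terms, using $u\ge 2$ to control both the geometric decay and the quadratic correction $S^2/n\lesssim 1$. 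All the key computations ($1-\rho_X = \frac{2(u+1)\Delta}{(1+u\Delta)(1+\Delta)}$, $1-\tilde\rho_Y=\frac{2(u-1)\Delta}{(1+u\Delta)(1-\Delta)}$, the KL symmetry identifying your exponents with the stated ones, and the fact that the ceiling in $k$ shifts $n\,\KL$ by only $O(1)$) check out. Your approach is more elementary and avoids the external citation; the paper's approach is shorter on the page but outsources the hard analysis.

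Two minor points to tighten. First, your argument yields $\bP(X\ge\cdot)\le \frac{O(1)}{1+u}\exp(-n\,\KL)$ rather than the explicit constant $2$ in the statement; this is harmless downstream (the constant only enters the definition of $u^\star$ in the proof of the two-phase proposition and can be renamed), but to literally match the lemma you would need to track the Stirling and geometric-sum constants, which your setup does permit. Second, your justification that $S=\lfloor 1/(C(u-1)\Delta)\rfloor\ge 1$ only works if $C\le 2$; for a larger $C$ you must handle $S=0$ separately, where $(u-1)\Delta\gtrsim 1$ forces $\Delta\gtrsim\frac{1}{1+u}$ and the single term $\bP(Y=k)\gtrsim\Delta\exp(-n\,\KL)$ already suffices. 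Since any fixed $C\ge 1$ makes the geometric factor $\tilde\rho_Y^{\,S}=\Omega(1)$, the cleanest fix is simply to take $C=2$.
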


\begin{proof}
By \citep[Theorem 2.1 and 2.2]{zhu2022nearly}, we have
\begin{align*}
    \bP \prn*{X\geq \frac{1+u\Delta}{2\Delta^2}} \leq \frac{4\Delta L\prn*{\frac{1}{\Delta^2}, \frac{1-u\Delta}{2\Delta^2}, \frac{1+\Delta}{2}  }}{\sqrt{2\pi (1-u^2\Delta^2)}} \exp\prn*{ -\frac{1}{\Delta^2} \KL\Big(\Ber(\frac{1-u\Delta}{2})\Big\|\Ber(  \frac{1+\Delta}{2}) \Big)  },
\end{align*}
where the function $L(k,x,p)$ is defined as
\begin{align*}
    L(k,x,p) &\ldef \frac{x+1-kp+\sqrt{(kp-x+1)^2+4(1-p)x}}{2}\\
    &= 1+ \frac{2(1-p)x}{kp+1-x +\sqrt{(kp+1-x)^2+4(1-p)x}}.
\end{align*}
We thus estimate
\begin{align*}
    L\prn*{\frac{1}{\Delta^2}, \frac{1-u\Delta}{2\Delta^2}, \frac{1+\Delta}{2}  } &=  1 + \frac{2\frac{1-\Delta }{2} \frac{1-u\Delta}{2\Delta^2} }{ 1+ \frac{1+u}{2\Delta} + \sqrt{ \prn*{1+ \frac{1+u}{2\Delta} }^2 + \frac{(1-\Delta)(1-u\Delta)}{\Delta^2}   }    }\\
    &\leq 1 + \frac{(1-\Delta)(1-u\Delta)}{2\Delta^2 \prn*{ \frac{1+u}{\Delta} }} \\
    &= \frac{(1+\Delta)(1+u\Delta)}{2\Delta(1+u)} < \frac{1}{\Delta(1+u)},
\end{align*}
where the last step uses $(1+\Delta)(1+u\Delta)\le \frac{5}{4}\cdot \frac{3}{2}<2$.
Thus combining with the assumption that $u \leq \frac{1}{2\Delta}$, we have
\begin{align*}
    \bP \prn*{X\geq \frac{1+u\Delta}{2\Delta^2}} &\leq \frac{4\Delta L\prn*{\frac{1}{\Delta^2}, \frac{1-u\Delta}{2\Delta^2}, \frac{1+\Delta}{2}  }}{\sqrt{2\pi (1-u^2\Delta^2)}} \exp\prn*{ -\frac{1}{\Delta^2} \KL\Big(\Ber(\frac{1-u\Delta}{2})\Big\|\Ber(  \frac{1+\Delta}{2}) \Big)  }\\
    &\leq \frac{2}{1+u} \exp\prn*{ -\frac{1}{\Delta^2} \KL\Big(\Ber(\frac{1-u\Delta}{2})\Big\|\Ber(  \frac{1+\Delta}{2}) \Big)  }
\end{align*}
as desired. Similarly, we have for the other side by \citep[Theorem 2.2]{zhu2022nearly},
\begin{align*}
    \bP \prn*{Y\geq \frac{1+u\Delta}{2\Delta^2}} \geq \frac{2\Delta L\prn*{\frac{1}{\Delta^2}, \frac{1-u\Delta}{2\Delta^2}, \frac{1-\Delta}{2}  }}{\sqrt{8 (1-u^2\Delta^2)}} \exp\prn*{ -\frac{1}{\Delta^2} \KL\Big(\Ber(\frac{1-u\Delta}{2})\Big\|\Ber(  \frac{1-\Delta}{2}) \Big)  }.
\end{align*}
We lower bound the function $L$ as
\begin{align*}
    L\prn*{\frac{1}{\Delta^2}, \frac{1-u\Delta}{2\Delta^2}, \frac{1-\Delta}{2}  } &=  1 + \frac{ \frac{(1+\Delta)(1-u\Delta)}{2\Delta^2} }{ 1+ \frac{u-1}{2\Delta} + \sqrt{ \prn*{1+ \frac{u-1}{2\Delta} }^2 + \frac{(1+\Delta)(1-u\Delta)}{\Delta^2}   }    }\\
    &\gtrsim  \frac{(1+\Delta)(1-u\Delta)}{\Delta^2 \prn*{ \frac{1+u}{\Delta} }} \gtrsim \frac{1}{\Delta(1+u)}.
\end{align*}
Thus, we have
\begin{align*}
        \bP \prn*{Y\geq \frac{1+u\Delta}{2\Delta^2}} &\gtrsim \frac{2\Delta L\prn*{\frac{1}{\Delta^2}, \frac{1-u\Delta}{2\Delta^2}, \frac{1-\Delta}{2}  }}{\sqrt{8 (1-u^2\Delta^2)}} \exp\prn*{ -\frac{1}{\Delta^2} \KL\Big(\Ber(\frac{1-u\Delta}{2})\Big\|\Ber(  \frac{1-\Delta}{2}) \Big)  }\\
        &\gtrsim\frac{1}{1+u}\exp\prn*{ -\frac{1}{\Delta^2} \KL\Big(\Ber(\frac{1-u\Delta}{2})\Big\|\Ber(  \frac{1-\Delta}{2}) \Big)  } .
\end{align*}
This concludes our proof.
\end{proof}

\subsection{Proof of \cref{prop:subopt_mutual_info}} \label{sec:mutual-info-fail}
We present an algorithm (cf. \cref{algo:main-modified}) that achieves an optimal success probability but suboptimal mutual information when $t \in \frac{\omega(1)}{\Delta^2} \cap \frac{n^{o(1)}}{\Delta^2}$.

\begin{algorithm}[ht]
  \caption{\textsc{ModifiedSequentialProbabilityRatioTest}($A,t,\Delta$)}
  \label{algo:main-modified}
  \begin{algorithmic}[1]
  \State \textbf{input:} action set $A$, number of rounds $t$, noise parameter $\Delta$
  \State \textbf{output:} an estimate of the best arm $\wh{a}\in A$
  \State Permute $A$ uniformly at random. Relabel elements of $A$ as $1,\ldots,n$ where $n=\abs{A}$.
  \State $w\gets \lceil t \Delta^2 \rceil$, $\theta_l \gets -1/\Delta$, $\theta_r \gets \frac{\log w}{\log \frac{1+\Delta}{1-\Delta}}$, $s \gets 0$
  \For{$i=1$ to $w$} \Comment{Only explores the first $w$ arms}
    \State $X^i \gets 0$
    \While{true}
      \If{$s = t$}
        \Return $\wh{a}=i$
      \EndIf
      \State Pull action $i$ and receive reward $r_t^i\in \{0,1\}$
      \State $X^i \gets X^i + 2 r_t^i - 1, s \gets s+1$
      \If{$X^i \ge \theta_r$}
        \Return $\wh{a}=i$ \Comment{Early stops for a promising estimate}
      \EndIf
      \If{$X^i \le \theta_l$}
        \textbf{break}
      \EndIf
    \EndWhile
  \EndFor
  \State \Return $\widehat{a}=1$
  \end{algorithmic}
\end{algorithm}

  \paragraph{Success probability.}
  The proof is a modification of the proof of \cref{thm:achieve-success}.
  The main difference between \cref{algo:main-modified} and \cref{algo:main} is that in addition to the lower threshold $\theta_l$, we now have an upper threshold $\theta_r$. When the random walk for an arm reaches $\theta_r$, we immediately return it as our estimate for the best arm instead of continuing pulling it. Another minor difference is that, \cref{algo:main-modified} only pulls the first $w := \lceil t\Delta^2 \rceil$ arms, even if the time budget has not been exhausted.

  By the same reasoning as the proof of \cref{thm:achieve-success}, \cref{algo:main-modified} can be restated as follows.
  \begin{enumerate}
    \item Permute the arms uniformly at random.
    \item For each arm $i$, if arm $i$ is the best arm, let $(X^i_j)_{j\ge 0}$ be the upward random walk (starting from $0$ with steps drawn from $D_+$); otherwise let $(X^i_j)_{j\ge 0}$ be the downward random walk (starting from $0$ with steps drawn from $D_-$). All these random walks are independent.
    \item Let $T_i$ be the first time that $X^i_{T_i}\le \theta_l$ and $U_i$ be the first time that $X^i_{U_i}\ge \theta_r$.
    Let $i$ be the smallest index such that $\sum_{k\in [i]} T_k > t$ and $j$ be the smallest index such that $U_j<\infty$.
    If either $i$ or $j$ exists, return arm $\min\{i,j\}$. Otherwise return arm $1$.
  \end{enumerate}

  Let $m = \lceil 0.1 t\Delta^2 \rceil$. Because $t \in \frac{\omega(1)}{\Delta^2} \cap \frac{n^{o(1)}}{\Delta^2}$, we have $m \in \omega(1)\cap n^{o(1)}$.
  Let $\cE_1$ be the event that the best arm is among the first $m$ arms (after random permutation).
  Let $\cE_2$ be the event that $\sum_{i\in [m]\backslash \{i^\star\}} T_i \le t$, where $i^\star$ is the optimal arm after the random permutation.
  Let $\cE_3$ be the event that $U_i=\infty$ for $i\in [m]\backslash \{i^\star\}$.
  Let $\cE_4$ be the event that $T_{i^\star}=\infty$.
  If $\cE_1 \cap \cE_2 \cap \cE_3 \cap \cE_4$ happens, then the algorithm returns the correct answer after $t$ rounds.
  In the following, we prove that $\bP(\cE_1 \cap \cE_2 \cap \cE_3 \cap \cE_4) = \Omega\pth{\frac mn}$.

  Clearly, $\bP(\cE_1) = \frac mn$.
  By the same proof as \cref{thm:achieve-success}, we have $\bP(\cE_2|\cE_1) \ge 0.9$ and $\bP(\cE_4|\cE_1) \ge 1-e^{-2}$. It remains to consider $\cE_3$.
  By \cref{lem:biased-random-walk}\ref{item:lem-biased-random-walk:upward}, for $i\ne i^\star$, $\bP(U_i<\infty) \le (\frac{1-\Delta}{1+\Delta})^{\theta_r} = \frac 1w$, with $w:=\lceil t\Delta^2 \rceil$.
  By a union bound, $\bP(\cE_3^c |\cE_1) \le \frac{m-1}{w}\le 0.1$. Therefore
  \begin{align*}
    \bP(\cE_1 \cap \cE_2 \cap \cE_3 \cap \cE_4) \ge \bP(\cE_1)(1-\bP(\cE_2^c|\cE_1)-\bP(\cE_3^c|\cE_1)-\bP(\cE_4^c|\cE_1)) = \Omega\pth{\frac mn}.
  \end{align*}
  This completes the proof.

  \paragraph{Mutual information.}
The analysis relies on an explicit expression for the posterior distribution $P_{\optarm | \cH_t}$ of the optimal arm given observations.
  Given $\cH_t$, let $s_i$ denote the number of pulls to arm $i$ and let $c_i$ denote the value of $X^i$ when the algorithm returns, for each $i\in [n]$.
  Then we have
  \begin{align*}
    &\bP(\cH_t | \optarm=i) \\
    &= \pth{\prod_{j\in [n]\backslash \{i\}} \pth{\frac{1-\Delta}{2}}^{(s_j+c_j)/2} \pth{\frac{1+\Delta}{2}}^{(s_j-c_j)/2}} \pth{\pth{\frac{1+\Delta}{2}}^{(s_i+c_i)/2} \pth{\frac{1-\Delta}{2}}^{(s_i-c_i)/2}} \\
    &\propto \pth{\frac{1+\Delta}{1-\Delta}}^{c_i} = \exp\pth{c_i \log \frac{1+\Delta}{1-\Delta}}.
  \end{align*}
  By Bayes' theorem, the posterior distribution $P_{\optarm|\cH_t}$ satisfies
  \begin{align} \label{eqn:proof-thm-achieve-info:c:posterior}
    P_{\optarm|\cH_t}(i) \propto \exp\pth{c_i \log \frac{1+\Delta}{1-\Delta}}.
  \end{align}
  In particular, the posterior distribution depends only on $c_i$'s, but not on $s_i$'s.

  Let $i_0$ be the last arm pulled by the algorithm (before it returns).
  Let $U\subseteq [n]$ be the set of arms that were pulled, excluding $i_0$.
  Let $V\subseteq [n]$ be the set of arms that were not pulled.
  Note that $|U|\le w-1$ and $[n] = U\cup V \cup \{i_0\}$. Next we compute the posterior distribution $P_{\optarm|\cH_t}$ based on \eqref{eqn:proof-thm-achieve-info:c:posterior}. For $i\in U$, we have $c_i = \floor{\theta_l}$, and
  \begin{align*}
    \exp\pth{c_i \log \frac{1+\Delta}{1-\Delta}}
    = \exp\pth{\floor{\theta_l} \log \frac{1+\Delta}{1-\Delta}}
    = \exp\pth{\Theta(1)} = \Theta(1).
  \end{align*}
  In the middle step we have used the assumption $\Delta = 1 - \Omega(1)$.  For $i\in V$, we simply have $c_i=0$.
  For the arm $i_0$, we have $\theta_l < c_{i_0} \le \ceil{\theta_r}$, so
  \begin{align*}
    \Omega(1) = \exp\pth{c_{i_0} \log \frac{1+\Delta}{1-\Delta}}
    \le \exp\pth{\ceil{\theta_r} \log \frac{1+\Delta}{1-\Delta}}
    \le \exp\pth{\log w + \log \frac{1+\Delta}{1-\Delta}} = O(w).
  \end{align*}
  Consequently, according to \eqref{eqn:proof-thm-achieve-info:c:posterior}, the posterior distribution of $\optarm$ given $\cH_t$ is
  \begin{align*}
      P_{\optarm | \cH_t} = \Big( \underbrace{\frac{a}{Z}, \dots, \frac{a}{Z}}_{\text{arms in }U}, \underbrace{\frac{b}{Z}}_{\text{arm }i_0}, \underbrace{\frac{1}{Z}, \dots, \frac{1}{Z}}_{\text{arms in }V} \Big),
  \end{align*}
  where $a=\exp\pth{\floor{\theta_l} \log \frac{1+\Delta}{1-\Delta}} = \Theta(1)$, $b=\exp\pth{c_{i_0} \log \frac{1+\Delta}{1-\Delta}}\in \Omega(1)\cap O(w)$, and
  \begin{align*}
      Z = ka + b + (n-k-1)
  \end{align*}
  is the normalizing factor, with $k := |U| \le w-1$.

  By the above expression of $P_{\optarm | \cH_t}$, we have
  \begin{align*}
    \nonumber \KL\pth{P_{\optarm | \cH_t} \| \Unif([n])} &= k \frac aZ \log \frac{n a}Z + \frac bZ \log \frac{n b}Z + (n-k-1) \log \frac nZ \\
    &= \log \frac nZ + k \frac aZ \log a + \frac bZ \log b \\
    &\le \left|\frac{Z}{n}-1\right| + \frac{ka\log a + b\log b}{Z} \\
    &\le \frac{k|a-1|+|b-1|}{n} + \frac{ka\log a + b\log b}{Z} \\
    &\lesssim \frac{w\log w}{n},
  \end{align*}
so that
  \begin{align*}
     I(\optarm; \cH_t) = \bE_{\cH_t}\qth{\KL\pth{P_{\optarm|\cH_t} \| \Unif([n])}} = O\pth{\frac{w \log w}n} = O\pth{\frac{t\Delta^2 \log(t\Delta^2)}{n}},
  \end{align*}
  which is the claimed result.

\subsection{Proof of \cref{prop:stopping_time}}
\paragraph{Achievability.}
We run \cref{algo:main} with $\frac n{\Delta^2}$ rounds with probability $\frac{t\Delta^2} n$ and return a uniformly random arm otherwise. The expected number of pulls is $\frac n{\Delta^2} \cdot \frac{t\Delta^2} n = t$.
When the former happens, we get $\Omega(1)$ success probability and $\Omega(\log n)$ mutual information by \cref{thm:main}.
When the latter happens, we get $\frac 1n$ success probability and $0$ mutual information.
So the expected success probability is $\Omega\pth{\frac{t\Delta^2} n + \frac 1n \pth{1-\frac{t\Delta^2} n}} = \Omega\pth{\max\sth{\frac 1n, \frac{t\Delta^2} n}}$ and the expected mutual information is $\Omega\pth{\frac{t\Delta^2 \log n} n}$.

\paragraph{Converse for success probability.}
By \cref{thm:main}, there exists $c>0$ such that any algorithm that always makes at most $\frac{c n}{\Delta^2}$ pulls has success probability at most $0.1$.
Now suppose we have an algorithm $\cA$ that makes $\frac{0.1 c n}{\Delta^2}$ queries in expectation.
By Markov's inequality, the probability that $\cA$ makes more than $\frac{c n}{\Delta^2}$ queries is at most $0.1$.
Let $\cA'$ be the algorithm that runs $\cA$, but stops and returns a uniformly random arm when $\cA$ is about to make more than $\frac{c n}{\Delta^2}$ queries.
Then $\cA'$ always makes at most $\frac{c n}{\Delta^2}$ queries, thus has success probability at most $0.1$.
By union bound, $\cA$ has success probability at most $0.2$.
We have proved that $\optspexp \le 0.2$ for any algorithm that makes at most $\frac{0.1 c n}{\Delta^2}$ queries in expectation.
The rest of the proof uses the boosting argument in \cref{sec:converse-small-t:success-prob} and is omitted.

\subsection{A conjecture for the stopping time setting} \label{sec:proof-discuss:open}
We conjecture that our achievability result for $\optmiexp$ in \cref{prop:stopping_time} is tight (i.e., $\optmiexp \lesssim \frac{t\Delta^2\log n}n$) and leave this as an open problem. 

Using the randomization argument in the above proof, one can show that $\frac 1t \optmiexp$ is a non-increasing function in $t$.
Therefore, our conjecture is equivalent to that $\lim_{t\to 0^+} \frac 1t \optmiexp \lesssim \frac{\Delta^2\log n}n$.

Let us briefly discuss the difficulty in proving a tight converse for $\optmiexp$.
The most natural idea is to adapt our proof of \cref{thm:converse-large-t}\ref{item:thm-converse-large-t:I_t} to the stopping time setting.
During the proof, we need to upper bound $\inf_{\widebar{P}_{\cH_t} }\En_{\optarm} \brk*{\KL\prn*{ \widebar{P}_{\cH_t} \| P_{\cH_t|\optarm}  } }$ for a model $\widebar{P}_{\cH_t}$ of our choice.
For the fixed time budget case, we choose $\widebar{P}_{\cH_t}$ to be the dummy model where all arms have reward distribution $\Ber\prn*{ \frac{1-\Delta}{2} }$.
However, for the stopping time case, it is not guaranteed that the expected number of pulls under this dummy model is still at most $t$.
In fact, there exist algorithms that have expected number of pulls $t$ under the actual model, but have infinite expected number of pulls under the dummy model.
Therefore, it is unclear how to adapt the proof of \cref{thm:converse-large-t}\ref{item:thm-converse-large-t:I_t} to the stopping time case.

\end{document}